\providecommand{\tabularnewline}{\\}
\providecommand{\algorithmname}{Algorithm}
\theoremstyle{plain}
\newtheorem{thm}{\protect\theoremname}
\theoremstyle{plain}
\newtheorem{prop}{\protect\propositionname}
\title{Learning Structural Causal Models from Ordering: Identifiable Flow Models}
\author{
    Minh Khoa Le \textsuperscript{\rm 1},
    Kien Do \textsuperscript{\rm 1},
    Truyen Tran \textsuperscript{\rm 1}
}
\providecommand{\propositionname}{Proposition}
\providecommand{\theoremname}{Theorem}
\begin{document}
\maketitle 
\begin{abstract}
In this study, we address causal inference when only observational
data and a valid causal ordering from the causal graph are available.
We introduce a set of flow models that can recover component-wise,
invertible transformation of exogenous variables. Our flow-based methods
offer flexible model design while maintaining causal consistency regardless
of the number of discretization steps. We propose design improvements
that enable simultaneous learning of all causal mechanisms and reduce
abduction and prediction complexity to linear $O(n)$ relative to
the number of layers, independent of the number of causal variables.
Empirically, we demonstrate that our method outperforms previous state-of-the-art
approaches and delivers consistent performance across a wide range
of structural causal models in answering observational, interventional,
and counterfactual questions. Additionally, our method achieves a
significant reduction in computational time compared to existing diffusion-based
techniques, making it practical for large structural causal models.

\end{abstract}
\global\long\def\Expect{\mathbb{E}}%
\global\long\def\Real{\mathbb{R}}%
\global\long\def\Data{\mathcal{D}}%
\global\long\def\Loss{\mathcal{L}}%
\global\long\def\Normal{\mathcal{N}}%
\global\long\def\sg{\text{sg}}%
\global\long\def\argmin#1{\underset{#1}{\text{argmin }}}%
\global\long\def\argmax#1{\underset{#1}{\text{argmax }}}%
\global\long\def\network{\text{MAVEN}}%

\section{Introduction}

Deep neural networks are highly expressive and learnable, but are
inherently associative, making it difficult for them to capture causal
relationships. This limitation can lead to inaccurate predictions
in fields where causality is crucial \cite{leroy2004causality,russo2007interpreting,nguyen2023causal}.
Among the efforts to alleviate this problem, a promising direction
dubbed causal representation learning \cite{scholkopf2021toward}
is to integrate neural networks within the framework of Structural
Causal Models (SCMs) \cite{Pearl2009}. SCMs are principled way to
answer observational, interventional, and counterfactual questions,
but learning them from data remains challenging. This paper focuses
on efficient learning of SCMs from only observational data and causal
ordering, leveraging deep neural networks to model causal relationships
in complex systems.

Some previous methods such as \cite{SanchezMartin2022} require a
fully observed causal graph, which can be infeasible in real-world
settings. Others \cite{javaloy2023causal,khemakhem2021causal} use
Autoregressive Normalizing Flows \cite{papamakarios2017masked,durkan2019neural},
which restrict model design to be monotonic and require additional
regularization to scale to multiple layers. \citet{Sanchez2022DiffusionCM}
propose a counterfactual estimation method using diffusion models
and classifier guidance, but it only considers bivariate causal graphs
and lacks theoretical analysis. \cite{chao2023interventional} generalize
previous diffusion-based methods but require a fully observed causal
graph and a separate deep neural network for each observed variable,
resulting in slow sequential inference and a large number of parameters.

We propose a identifiable flow models that requires only observational
data and a valid causal ordering. Our approach is designed to represent
SCMs and ensure causal consistency through its structure. A parallel
sped-up design can answer observational, interventional, and counterfactual
questions with computational complexity scaling linearly with the
number of model layers, independent of the causal graph's node count.
This scalability enables efficient handling of large causal models,
making our approach practical and effective for diverse applications.

We empirically demonstrate that our method outperforms competing approaches
across a wide range of synthetic and real datasets, excelling in estimating
both the mean and overall shape of interventional and counterfactual
distributions. The experiments confirm that our parallel architecture
is not only scalable but also maintains high performance as the complexity
and size of the datasets increase. 

Our key contributions are:
\begin{enumerate}
\item We prove the identifiability of flow models for learning SCMs from
observational data and causal ordering.
\item We introduce novel model designs enabling parallel abduction and approximated
prediction, removing autoregressive constraints and significantly
reducing computational and memory requirements.
\item We validate our methods' effectiveness and performance on diverse
synthetic and real-world datasets.
\end{enumerate}

\section{Related Work}

Recent advances in deep generative models (DGMs) have found their
way into learning Structural Causal Models (SCMs). \citet{Karimi2020}
propose a conditional variational autoencoder (VAE) for each Markov
factorization implied by the causal graph. VAEs on graphs are also
studied assuming certain design constraints but have yet to achieve
empirical success \cite{Zecevic2021,SanchezMartin2022}. \cite{kocaoglu2017causalgan}
use generative adversarial network for learning a causal implicit
generative model for a given causal graph. \cite{geffner2022deep}
propose an autoregressive-flow based non-linear additive noise end-to-end
framework for causal discovery and inference. \cite{Pawlowski2020}
introduce several generative models to learn SCMs, but these are not
guaranteed to learn the true causal mechanism, as multiple models
can produce the same observational distribution. Another DGM class,
autoregressive normalizing flows, has also been suggested \cite{khemakhem2021causal}.
\cite{javaloy2023causal} generalize this by considering a class of
triangular monotonically increasing maps that are identifiable up
to invertible, component-wise transformations. However, this model
design is restricted to monotonic functions and requires additional
regularization to scale to multiple layers. \cite{scetbon2024fip}
view SCMs as a fixed-point problem over causally ordered variables,
infer causal ordering from data and use it to develop a fixed-point
SCM via an attention-based autoencoder.

Diffusion models represent another competitive class of DGMs. \citet{Sanchez2022DiffusionCM}
use diffusion models for counterfactual estimation in bivariate graphs
where an image class is the parent of an image. However, this approach
requires training a separate classifier \cite{dhariwal2021diffusion}
for \emph{do}-interventions, lacks theoretical guarantees, and shows
poor performance for more complex images. \cite{chao2023interventional}
offer both interventional and counterfactual inferences, but only
guarantee identifiability with a fully observed causal graph. Their
method requires a separate neural network for each causal node and
sequential inference, making it computationally expensive and memory-intensive
for large causal graphs. In contrast, our proposed flow models are
flexible in design while maintaining causal consistency. Our approach
reduces computational and memory complexity, enhances scalability
by avoiding separate neural networks for each causal node, and eliminates
sequential inference, making it more practical for large causal graphs.

Alternative to DGMs, Ordinary Differential Equations (ODEs) have been
used to describe deterministic SCMs, which is often unrealistic \cite{mooij2013ordinary}.
\cite{peters2022causal} introduce the Causal Kinetic Model, a collection
of ODEs requiring parent values at each time step. \cite{hansen2014causal}
illustrate a causal interpretation of SDE, show how to apply interventions
to a SDE. \cite{wang2023neural} combine neural SDE with variational
inference to model causal structure in continuous-time time-series
data..

\section{Preliminaries}

\subsection{Structural Causal Models (SCMs)}

Given a directed acyclic graph (DAG) graph $\mathcal{G}=\left(\mathcal{V},\mathcal{E}\right)$
representing the causal relationships between $d$ endogenous variables
$x=\left\{ x^{1},...,x^{d}\right\} $, a SCM \cite{Pearl2009} $\mathcal{M}$
associated with $\mathcal{G}$ is a set of structural equations $x^{i}=f^{i}\left(x^{\text{pa}_{i}},u^{i}\right)$
for all $i\in\left\{ 1,...,d\right\} $ that characterize how each
node $x^{i}$ in $\mathcal{V}$ is generated from its parent nodes
$x^{\text{pa}_{i}}:=\left\{ x^{j}\mid\text{the directed edge }\left(j,i\right)\in\mathcal{E}\right\} $
and the corresponding exogenous variable $u^{i}$ via a deterministic
function $f^{i}$. Usually $u=\left\{ u^{1},...,u^{d}\right\} $ are
assumed to be jointly independent, i.e., $p\left(u^{1},...,u^{d}\right)=\prod_{i=1}^{d}p\left(u^{i}\right)$.
This makes the SCM $\mathcal{M}$ Markovian, leading to the factorization
$p\left(x^{1},...,x^{d}\right)=\prod_{i=1}^{d}p\left(x^{i}|x^{\text{pa}_{i}}\right)$.
Since $\mathcal{G}$ is acyclic, we can specify a causal ordering
$\pi$ of all nodes such that if node $x^{j}$ is a parent of node
$x^{i}$ then $\pi\left(j\right)<\pi\left(i\right)$. 

For deep neural networks to answer causal questions, we can treat
$u$ as latent exogenous variables, encode exogenous variables to
latent spaces $z=\tilde{f}_{\text{encode}}\left(x,x^{\text{pa}}\right)$
as the abduction step, and decode back $x=\tilde{f}_{\text{decode}}\left(z,x^{\text{pa}}\right)$
as the prediction step. This structure  enables VAEs, GANs and Normalizing
Flows to learn deep SCMs. 

\paragraph*{Causal consistency}

A mapping $T$ between variables $u$ and $x$ is deemed causally
consistent with structural causal model $\mathcal{M}$ if it shares
the same causal dependencies with the $\mathcal{M}$. It means their
Jacobian matrices have zero values in the same positions, i.e., $\nabla_{u}T\left(u\right)\equiv I+\sum_{i=1}^{\text{diam}(A)}A^{i}$
and $\nabla_{x}T^{-1}\left(x\right)\equiv I-A$, where $I$ is the
identity matrix and $A$ is the adjacency matrix of the causal graph.

\subsection{Triangular Monotonically Increasing (TMI) Maps for Identifiable SCMs }

A function $f:\mathbb{R}^{d}\rightarrow\mathbb{R}^{d}$ is a monotone
increasing triangular map if:
\begin{equation}
f\left(x\right)=\left[\begin{array}{c}
f_{1}\left(x_{1}\right)\\
f_{2}\left(x_{1},x_{2}\right)\\
\vdots\\
f_{d}\left(x_{1},...,x_{d}\right)
\end{array}\right],
\end{equation}
where each $f_{i}:\mathbb{R}\rightarrow\mathbb{R}$ is monotone increasing
(or decreasing) with respect to $x_{i}$ for any $x_{1:i-1}$. In
case $\mu=\nu\cdot f$ where $\mu,\nu$ are strictly positive density
and $f$ is a TMI map, then $f$ is equivalent to the Knothe--Rosenblatt
(KR) transport almost everywhere \cite{jaini2019sum}.

Identifiability refers to recovering ground truth latent factors.
\cite{Xi2023} show that nonlinear independent component analysis
(ICA) models with generator functions that are TMI maps, and fully
supported latent distributions with independent components, are identifiable
up to invertible, component-wise transformations. This means we can
recover the model up to an invertible, component-wise transformation
of the true latent factors. In causal representation learning, on
top of identifying the latent representation, the causal graph encoding
their relations must also be identifiable. \cite{javaloy2023causal}
further show that not only can the model isolate the exogenous variables,
it also shares the functional dependencies with true structural equations.
However, due to the TMI assumption, their model design must fulfill
the monotonic requirements. 

\subsection{Diffusion and Flow Models}

In continuous-time diffusion models, the stochastic process of generating
data from a Gaussian prior can be represented via a backward SDE \cite{song2021scorebased}:
\begin{equation}
dx=\left(f\left(x,t\right)-g^{2}\left(t\right)\nabla_{x}\log p_{t}\left(x\right)\right)dt+g\left(t\right)d\overline{w}
\end{equation}
where $\overline{w}$ denotes the reverse-time Wiener process. The
probability flow ODE (PF ODE) which shares the same marginal probability
densities $p_{t}\left(x_{t}\right)$ as the SDE above is expressed
as follow:
\begin{equation}
dx=\left(f\left(x,t\right)-\frac{1}{2}g\left(t\right)^{2}\nabla_{x}\text{log}p_{t}\left(x\right)\right)dt,
\end{equation}
The PF ODE enables efficient mappings in both directions, from the
data to the prior and vice versa. The PF ODE can be modeled via the
score matching \cite{hyvarinen2005estimation,song2019generative}
or flow matching framework \cite{lipman2023flow,liu2023flow}. In
the latter case, the velocity $v_{\theta}\left(x,t\right)$ of the
PF ODE is learned by minimizing the mean square error w.r.t. the target
velocity $v\left(x_{0},x_{1},t\right)$ over time $t$ sampled uniformly
in $\left[0,1\right]$:
\begin{equation}
\Loss_{\text{FM}}=\mathbb{E}_{t}\Expect_{x_{0},x_{1}}\left[\left\Vert v_{\theta}\left(x_{t},t\right)-v\left(x_{0},x_{1},t\right)\right\Vert ^{2}\right]
\end{equation}
To enable fast sampling, the data and prior distributions can be connected
through a ``stochastic interpolant'' $X_{t}=\left(1-t\right)X_{0}+tX_{1}$
\cite{albergo2023stochastic,liu2023flow}. In this case, $v\left(x_{0},x_{t},t\right)=x_{1}-x_{0}$
and the PF ODE is called the Rectified Flow.

\section{Method}

In this section, we first prove that set of flow models are an identifiable
and flexible choice in learning SCMs given only observational data
and causal ordering. For ensuring identifiablity we assume that SCMs
are Markovian, acyclic with diffeomorphic structural equations. Then,
we propose a compact model design that  can do fast inference (prediction
and abduction) in parallel.

\subsection{Identifiable Causal Flow Models for Learning of SCMs with Ordering}

\begin{figure}[t]
\begin{centering}
\includegraphics[width=1\columnwidth]{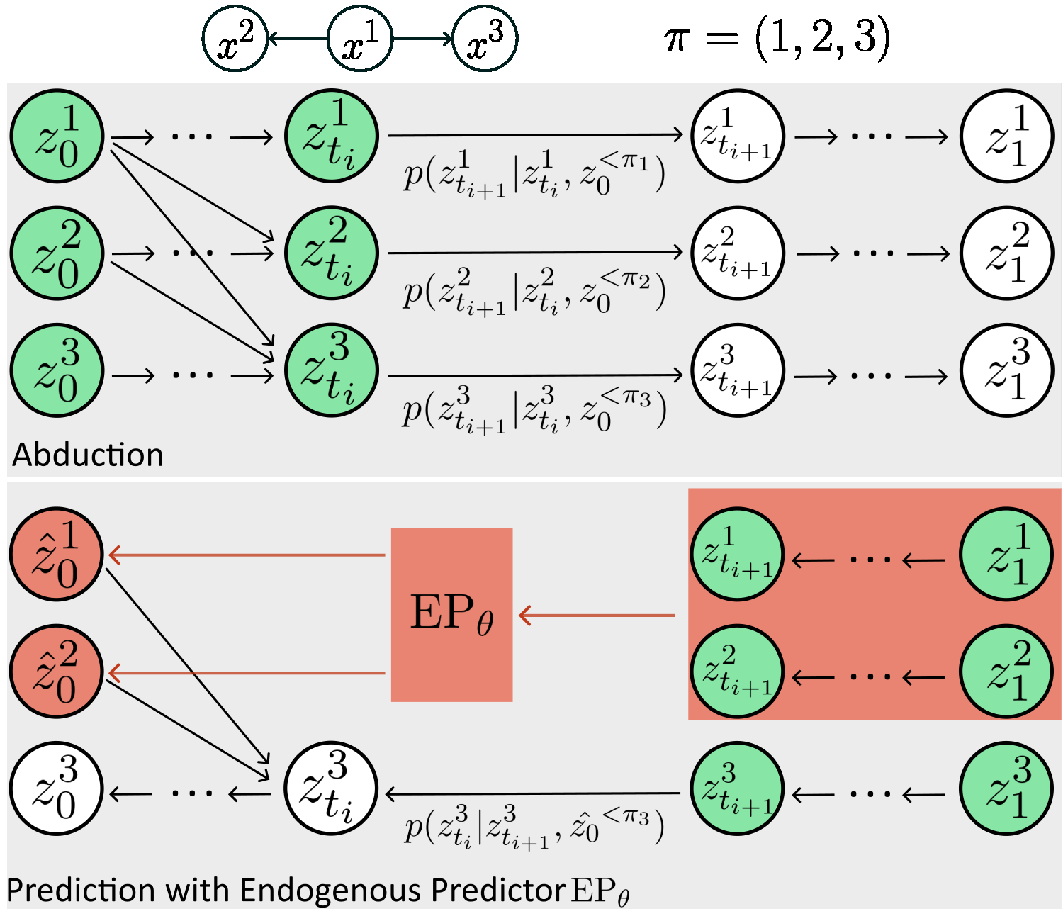}
\par\end{centering}
\caption{Parallel Causal Flow Model (P-CFM) of a simple SCM of 3 nodes (\emph{Top}),
unrolled into abduction (\emph{Middle}) and prediction processes \emph{(Bottom}).
Green nodes are known, white nodes are to be calculated, and orange
nodes are approximated. In abduction, all  $\left\{ p\left(z_{t_{i+1}}^{j}|z_{t_{i}}^{j},z_{0}^{<\pi_{j}}\right)\right\} _{j=1}^{d}$
can be calculated with one forward pass. In prediction, we show an
example of how $p\left(z_{t_{i}}^{3}|z_{t_{i+1}}^{3},\hat{z}_{0}^{<\pi_{j}}\right)$
can be calculated by approximating $z_{0}^{1},z_{0}^{2}$ using the
endogenous predictor $\text{EP}_{\theta}$ given $z_{t_{i+1}}^{1},z_{t_{i+1}}^{2},z_{1}^{1},z_{2}^{1}$.
All $\left\{ p\left(z_{t_{i}}^{j}|z_{t_{i+1}}^{j},\hat{z}_{0}^{<\pi_{j}}\right)\right\} _{j=1}^{d}$
can be calculated simultaneously.\label{fig:Parallel-Causal-Diffusion}}
\end{figure}

We introduce a flow-based model for learning SCMs with ordering that
is identifiable. Our approach involves constructing a set of flows,
one for each node, so that i) the distribution of exogenous variables
is a factorized distribution with support over the entire space $\mathbb{R}^{d}$,
and ii) the mapping from endogenous variables to exogenous variables
is a TMI map. This allows us to leverage a theoretical result from
\cite{Xi2023} (Proposition 5.2) to prove for identifiability.

Let $p\left(u\right)$ represent a jointly factorized distribution
of all $d$ exogenous variables $u=\left\{ u^{1},...,u^{d}\right\} $,
i.e., $p\left(u\right)=\prod_{i=1}^{d}p\left(u^{i}\right)$ where
$p\left(u^{i}\right)$ is a Gaussian distribution in $\Real$. We
assume that while the structural causal model (SCM) $\mathcal{M}$
is unknown, the causal ordering $\pi$ among the nodes is known. We
represent each node $x^{i}\in\Real$ as the value at time $t=1$ of
the initial value problem (IVP) below:
\begin{equation}
dz_{t}^{i}=v^{i}\left(z_{t}^{i},u^{<\pi_{i}},t\right)dt,\;\;\;z_{0}^{i}=u^{i}\label{eq:causal_ivp}
\end{equation}
where $z_{t}^{i}$ denotes the state at time $t$, $u^{<\pi_{i}}:=\left\{ u^{j}\vert\pi_{j}<\pi_{i}\right\} $
is the set of nodes with lower orders than $u^{i}$ according to $\pi$.
The solution at time $t$ of the above IVP can be expressed as $z_{t}^{i}=z_{0}^{i}+\int_{0}^{t}v^{i}\left(z_{\tau}^{i},u^{<\pi_{i}},\tau\right)d\tau$,
which means $z_{t}^{i}$ can be regarded as a function of the initial
value $z_{0}^{i}$. Let $f^{i}\left(u^{i},u^{<\pi_{i}},t\right)=f^{i}\left(z_{0}^{t},u^{<\pi_{i}},t\right):=z_{t}^{i}$
denote the representation of node $i$ at time $t$, and let $f\left(u,t\right):=\left(f^{1}\left(u^{1},u^{<\pi_{1}},t\right),...,f^{d}\left(u^{d},u^{<\pi_{d}},t\right)\right)$
represent the state of all nodes at time $t$. Below, we show that
$f\left(u,t\right)$ is a TMI map of $u$ satisfying two key properties:
\emph{monotonicity} and \emph{triangularity}, as stated in Theorem~\ref{thm:tmi}.
\begin{thm}
Let $f\left(u,t\right)$ be the solution of the set of initial value
problems (IVPs) for all nodes at time $t$, with the IVP for node
$i$ is described in Eq~\ref{eq:causal_ivp}. If the velocity function
$v^{i}\left(z_{t}^{i},u^{<\pi_{i}},t\right)$ is continuous w.r.t.
$t$ and Lipschitz continuous w.r.t. $z_{t}^{i}$ for all $t\in\left(0,1\right)$,
$u^{<\pi_{i}}\in\Real^{\pi_{i}-1}$, and $i\in\left\{ 1,...,d\right\} $,
then $f$ is a triangular monotonically increasing (TMI) map of $u$.\label{thm:tmi}
\end{thm}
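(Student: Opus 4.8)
The plan is to verify the two defining properties of a TMI map separately: \emph{triangularity}, which follows almost directly from the way the IVP system in Eq.~\ref{eq:causal_ivp} is wired together, and \emph{monotonicity} in the diagonal argument, which is the substantive part and rests on the uniqueness theory of scalar ODEs.

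First I would establish triangularity. After relabelling the nodes so that the causal ordering $\pi$ becomes the identity (i.e.\ $\pi_i = i$), observe that the IVP for node $i$ uses $u^{<\pi_i} = \{u^1,\dots,u^{i-1}\}$ only as fixed parameters inside the velocity $v^i$, while its initial condition is $z_0^i = u^i$. Hence the solution $z_t^i$, and therefore $f^i(u^i, u^{<\pi_i}, t)$, is a function of $(u^1,\dots,u^i)$ alone and is insensitive to $u^{i+1},\dots,u^d$. Listing the components in order yields exactly the lower-triangular dependence structure required of a triangular map.

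Next I would prove strict monotonicity of $f^i$ in its diagonal argument $u^i$, for arbitrary fixed $u^{<\pi_i}$ and $t$. The engine here is the Picard--Lindel\"of (Cauchy--Lipschitz) theorem: since $v^i$ is continuous in $t$ and Lipschitz in $z_t^i$ on $(0,1)$, with $u^{<\pi_i}$ held fixed the scalar IVP has a unique solution through each point, and the integral form $z_t^i = u^i + \int_0^t v^i\, d\tau$ shows the trajectory is continuous up to $t=0$ where it equals its initial datum. Fix $a<b$ and let $z_\cdot(a), z_\cdot(b)$ be the trajectories with $z_0 = a$ and $z_0 = b$; both solve the same scalar non-autonomous ODE with the shared fixed parameter $u^{<\pi_i}$. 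The key claim is a \emph{non-crossing} property: if $z_\tau(a) = z_\tau(b)$ for some $\tau\in(0,1)$, then both trajectories pass through the common point $(\tau, z_\tau(a))$, so uniqueness forces them to coincide on the whole interval, giving $a = z_0(a) = z_0(b) = b$, contradicting $a\neq b$. Since $z_0(a) = a < b = z_0(b)$ and the continuous trajectories never meet, $z_t(a) < z_t(b)$ for all $t$, i.e.\ $f^i(a, u^{<\pi_i}, t) < f^i(b, u^{<\pi_i}, t)$, which is strict increase in $u^i$.

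The main obstacle, and the step I would be most careful with, is making this non-crossing argument airtight, since all of monotonicity hinges on it; in particular I must rule out the trajectories merely touching rather than transversally crossing, which is precisely what the uniqueness-through-a-point statement delivers. A cleaner quantitative alternative, available when $v^i$ is additionally $C^1$ in $z$, is to differentiate the integral form in $u^i$ to obtain the variational (linear) ODE for $w_t := \partial z_t^i/\partial u^i$ with $w_0 = 1$, whose solution $w_t = \exp\!\bigl(\int_0^t \partial_z v^i\, d\tau\bigr) > 0$ gives positivity of the diagonal derivative directly; I would cite this as intuition but rely on the uniqueness-based argument to stay within the stated Lipschitz hypothesis. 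Combining triangularity with strict monotonicity in each diagonal variable shows that $f(u,t)$ is a TMI map, completing the proof.
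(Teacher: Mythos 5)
Your proposal is correct and follows essentially the same route as the paper: triangularity is read off from the wiring of the IVP system, and strict monotonicity in $u^{i}$ is obtained from the non-crossing of trajectories guaranteed by Picard--Lindel\"of uniqueness. If anything, your version of the non-crossing step (touching at any $\tau$ forces coincidence on the whole interval, hence at $t=0$, contradicting $a\neq b$) is slightly cleaner than the paper's, which locates a crossing time $\gamma$ via the intermediate value theorem and then invokes uniqueness forward from $\gamma$.
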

\begin{proof}
First, we prove that $f^{i}\left(u^{i},u^{<\pi_{i}},t\right)$ is
a monotonically increasing function of $u^{i}$. Since $u^{<i}$ is
constant w.r.t. $u^{i}$ and $t$, we can simplify the notation by
denoting $\hat{f}^{i}\left(u^{i},t\right):=f^{i}\left(u^{i},u^{<\pi_{i}},t\right)$
and consider $\hat{f}^{i}$ as a function of $u^{i}$ and $t$ only.
Suppose there exists two initial values $a$, $b$ of $u_{i}$ such
that $a>b$ (or equivalently, $\hat{f}^{i}\left(a,0\right)>\hat{f}\left(b,0\right)$)
but $\hat{f}^{i}\left(a,t\right)\leq\hat{f}^{i}\left(b,t\right)$.
Since $\hat{f}^{i}\left(a,t\right)$ and $\hat{f}^{i}\left(b,t\right)$
are continuous for all $t\in\left(0,1\right)$ (due to the continuity
of $v^{i}$ w.r.t. $t$ and $z_{t}^{i}$), there must exists $\gamma\in(0,t]$
such that $\hat{f}^{i}\left(a,\gamma\right)=\hat{f}^{i}\left(b,\gamma\right)=\zeta$
(as illustrated in Fig.~\ref{fig:theorem}). This implies that $\hat{f}^{i}\left(a,t\right)$,
$\hat{f}^{i}\left(b,t\right)$ are two distinct solutions of the IVP:
\begin{equation}
dz_{t}^{i}=v^{i}\left(z_{t}^{i},u^{<\pi_{i}},t\right)dt,\;\;\;z_{\gamma}^{i}=\zeta.\label{eq:contradict_ivp}
\end{equation}
with $t\in\left(\gamma,1\right).$This contradicts the Picard-Lindelof
theorem, which states the IVP in Eq.~\ref{eq:contradict_ivp} has
a unique solution if $v^{i}\left(z_{t}^{i},u^{<\pi_{i}},t\right)$
is continuous w.r.t. $t$ and Lipschitz continuous w.r.t. $z_{t}^{i}$
\cite{simmons2016differential,chen2018neural}. This means $\hat{f}^{i}\left(a,t\right)>\hat{f}^{i}\left(b,t\right)$
if $a>b$, meaning $\hat{f}^{i}\left(u^{i},t\right)$ is a monotonically
increasing function of $u^{i}$. Proving that $f$ is a triangular
map is straightforward since, by design, $f^{i}$ is a function of
nodes with orders lower than or equal to $\pi_{i}$. Consequently,
$f$ is a TMI map of $u$.
\begin{figure}[t]
\begin{centering}
\includegraphics[width=0.8\columnwidth]{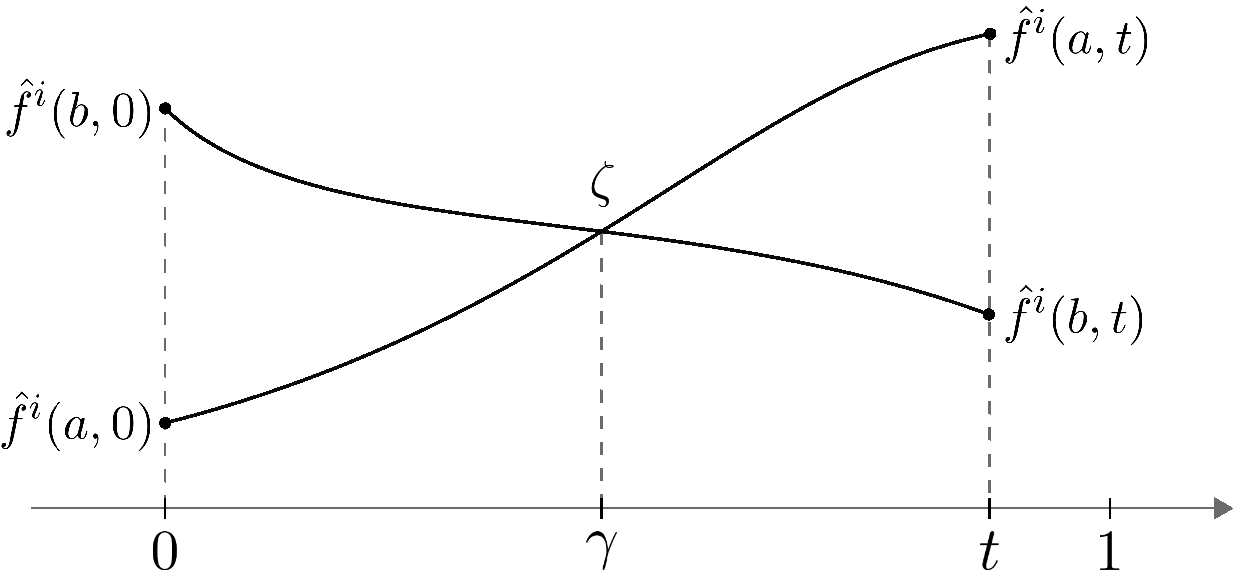}
\par\end{centering}
\caption{An illustration of the case where $\hat{f}^{i}\left(u^{i},t\right)$
is \emph{not} a monotonically increasing function of $u^{i}$ for
all $t\in\left(0,1\right)$. Since $\hat{f}^{i}\left(u^{i},t\right)$
is continuous w.r.t. $t$, we can find $\gamma\in(0,t]$ such that
$\hat{f}^{i}\left(a,\gamma\right)=\hat{f}^{i}\left(b,\gamma\right)=\zeta$.
This leads to two distinct solutions of the IVP starting at $z_{\gamma}^{i}=\zeta$,
which contradicts the Picard-Lindelof theorem.\label{fig:theorem}}
\end{figure}
\end{proof}
Let $v_{\theta}^{i}$ be a parameterized model of $v^{i}$. To ensure
that $v_{\theta}^{i}\left(z_{t}^{i},u^{<\pi_{i}},t\right)$ is continuous
w.r.t. $t$ and Lipschitz continuous w.r.t. $z_{t}^{i}$, we model
it using a feed-forward neural network with finite weights and 1-Lipschitz
continuous activation functions (e.g., sigmoid, tanh, ReLU, ELU).
A detailed explanation for this can be found in Appendix. The input
to this network is the concatenation of $z_{t}^{i}$, $u^{<\pi_{i}},$and
$t$.

Since $u^{<\pi_{i}}$ is typically unknown and often varies during
training $v_{\theta}^{i}\left(z_{t}^{i},u^{<\pi_{i}},t\right)$, we
instead model $v_{\theta}^{i}\left(z_{t}^{i},x^{<\pi_{i}},t\right)$
in practice and consider this velocity in subsequent discussions.
This velocity can be viewed as a reparameterization of $v_{\theta}^{i}\left(z_{t}^{i},u^{<\pi_{i}},t\right)$
because $x^{<\pi_{i}}=f\left(u^{<\pi_{i}},1\right)$.

We note that our method can be applied when the causal graph $\mathcal{G}$
is available. In this case, we simply replace $x^{<\pi_{i}}$ in $v_{\theta}^{i}\left(z_{t}^{i},x^{<\pi_{i}},t\right)$
with $x^{\text{pa}_{i}}$, ensuring that our method remains consistent
with the SCM $\mathcal{M}$.

Since $v_{\theta}^{i}$ represents the velocity of an ODE that maps
$p\left(u^{i}\right)=\Normal\left(0,\mathrm{I}\right)$ to $p\left(x^{i}\right)$,
we learn $v_{\theta}^{i}$ by first constructing a reference diffusion
process between $p\left(x^{i}\right)$ and $p\left(u^{i}\right)$,
and then aligning $v_{\theta}^{i}$ with the velocity of the PF ODE
corresponding to this diffusion process. In this work, we select $Z_{t}^{i}=\left(1-t\right)U^{i}+tX^{i}$
as the reference process due to the sampling efficiency of its PF
ODE. This choice results in the following training objective for all
$v_{\theta}^{i}$ ($i\in\left\{ 1,...,d\right\} $):
\begin{equation}
\mathcal{L}\left(\theta\right)=\sum_{i=1}^{d}\Expect_{t}\Expect_{x^{i},u^{i}}\left\Vert v_{\theta}^{i}\left(z_{t}^{i},x^{<\pi_{i}},t\right)-\left(x^{i}-u^{i}\right)\right\Vert _{2}^{2}+\lambda\left\Vert \theta\right\Vert _{2}^{2}\label{eq:cfm-objective}
\end{equation}
In Eq.~\ref{eq:cfm-objective}, we include a weight regularization
term to ensure that $\theta$ is finite. We refer to our method as
a \emph{Causal Flow Model} (CFM). After training $v_{\theta}^{i}\left(z_{t}^{i},x^{<\pi_{i}},t\right)$
for all $i\in\left\{ 1,...,d\right\} $, we can compute $x^{i}$ from
$u^{i}$ and $x^{<\pi_{i}}$ as $x^{i}=f^{i}\left(u^{i},x^{<\pi_{i}},1\right)$
by solving the ODE in Eq.~\ref{eq:causal_ivp} forward in time. Similarly,
we can compute $u^{i}$ from $x^{i}$ and $x^{<\pi_{i}}$ as $u^{i}=\left(f^{i}\right)^{-1}\left(x^{i},x^{<\pi_{i}},0\right)$
by solving this backward in time. These two steps correspond to the
prediction and abduction steps.

\subsection{Efficient Causal Flow Models}

\subsubsection{Efficient Abduction with Masked Autoregressive Velocity nEural Network
(MAVEN)}

To approximate the velocity $v$ for all nodes while reducing time
and memory complexity, we propose the MAVEN architecture. It comprises
two key components: a Masked Autoregressive Neural Network ($\text{MADE }_{\theta}$)
with a fixed ordering \cite{germain2015made} and a simple feed-forward
neural network $h_{\theta}$. $\text{MADE }_{\theta}$ encodes $x$
into $c\in\mathbb{R}^{d\times k}$, where each vector $c^{i}$ contains
node-specific encoded information dependent only on preceding variables.
The feed-forward network then predicts node velocity by taking a concatenated
vector of $[z_{t}^{i},c^{i},t,i]$.

\begin{align}
c^{i} & =\text{MADE }_{\theta}\left(x^{<\pi_{i}}\right)\\
v_{t}^{i} & =h_{\theta}\left(\left[z_{t}^{i},c^{i},t\right]\right)\\
v_{\theta_{\text{}}}^{\text{\text{}}}\left(z_{t},x,t\right) & =\left[h_{\theta}\left(\left[z_{t}^{1},c^{1},t\right]\right),...,h_{\theta}\left(\left[z_{t}^{d},c^{d},t\right]\right)\right]^{\text{T}}
\end{align}
We enforce the flow of each node to follows a linear path from the
endogenous variable $x^{i}$ to exogenous variable $u^{i}$. This
is achieved through the prior ODE defined as $dx^{i}=\left(u^{i}-x^{i}\right)dt$.
$v_{\theta_{\text{}}}^{\text{\text{}}}$ can be trained using gradient
descent by solving the regression problem:
\begin{equation}
\text{\ensuremath{\theta^{*}=\text{argmin}}}_{\theta}\int_{0}^{1}\mathbb{E}\left[\left\Vert v_{\theta_{\text{}}}^{\text{}}\left(z_{t},x,t\right)-\left(u-x\right)\right\Vert ^{2}\right]dt.
\end{equation}
We call the proposed method as \textbf{Sequential-Causal Flow Model}
(S-CFM). It optimizes the abduction process by reducing computational
complexity from $O(nd)$ to $O(n)$, enabling more efficient processing
of large-scale causal networks. See Abduction in Fig.~\ref{fig:Parallel-Causal-Diffusion}
for an example of 3-node causal graph.

Next, we propose a method that reduce complexity of prediction step
from $O\left(nd\right)$ to $O\left(n\right)$, enabling parallel
prediction.

\subsubsection{Endogenous predictor}

During prediction, we encounter challenges due to limited information
about endogenous variables, necessitating a sequential, node-by-node
prediction approach. At each time step $t$, the $i$-th endogenous
variable $x^{i}$ is deterministically derived from current and preceding
exogenous variables $u^{\leq\pi_{i}}:=\left\{ u^{j}\vert\pi_{j}\leq\pi_{i}\right\} $.
By leveraging a neural network, specifically a MADE, denoted as $\text{EP}_{\theta_{\text{}}}\left(z_{t},u,t\right)$,
we can accurately predict these variables. The predicted endogenous
variables then serve as input for MAVEN to estimate velocity at each
time step, effectively acting as a distillation mechanism for the
flow model.

The endogenous predictor is trained by solving the following optimization
problem:
\begin{equation}
\text{\ensuremath{\theta^{*}=\text{argmin}}}_{\theta}\int_{0}^{1}\mathbb{E}\left[\left\Vert \text{EP}_{\theta}\left(z_{t},u,t\right)-x\right\Vert ^{2}\right]dt.
\end{equation}

We introduce the Parallel-Causal Flow Model (P-CFM) by integrating
S-CFM with an endogenous predictor. The training of the predictor
leverages the abduction process, sampling from $x$ with linear complexity.
This approach ensures efficient training, enabling scalability and
effective handling of larger datasets with minimal computational overhead.

See Fig.~\ref{fig:Parallel-Causal-Diffusion} for an example of P-CFM
on a 3-node SCM. The training of P-CFM is summarized in Alg.~\ref{alg:P-CDM-Model-Training},
the abduction step in Alg.~\ref{alg:abduct} and the prediction step
in Alg.~\ref{alg:predict}.

\begin{table*}[t]
\begin{centering}
\begin{tabular}{>{\raggedright}m{1.8cm}lrrrrr}
\toprule 
\multirow{2}{1.8cm}{\centering{}SCM} & \multirow{2}{*}{Metric} & S-CFM & P-CFM & CausalNF-NSF & CausalNF-MAF & VACA\tabularnewline
\cmidrule{3-7} \cmidrule{4-7} \cmidrule{5-7} \cmidrule{6-7} \cmidrule{7-7} 
 &  & $\left(\times10^{-2}\right)$ & $\left(\times10^{-2}\right)$ & $\left(\times10^{-2}\right)$ & $\left(\times10^{-2}\right)$ & $\left(\times10^{-2}\right)$\tabularnewline
\midrule
\midrule 
\multirow{3}{1.8cm}{\centering{}Triangle\linebreak{}
NADD} & Obs. MMD & $0.67_{0.07}$ & $\textit{0.65}_{\textit{0.15}}$ & $\boldsymbol{0.43_{0.08}}$ & $2.03_{0.14}$ & $2.37_{0.25}$\tabularnewline
 & Int. MMD & $3.21_{0.34}$ & $\textit{3.16}_{0.24}$ & $\boldsymbol{1.86_{0.09}}$ & $13.31_{0.29}$ & $15.53_{1.24}$\tabularnewline
 & CF. MSE & $\boldsymbol{6.53_{0.34}}$ & $\textit{9.59}_{3.53}$ & $41.90_{11.74}$ & $9.91_{1,02}$ & $146.73_{16.29}$\tabularnewline
\midrule 
\multirow{3}{1.8cm}{\centering{}Simpson\linebreak{}
NLIN2} & Obs. MMD & $\boldsymbol{0.57_{0.05}}$ & $\textit{0.61}_{\textit{0.05}}$ & $0.62_{0.05}$ & $3.54_{0.21}$ & $2.44_{0.22}$\tabularnewline
 & Int. MMD & $\boldsymbol{0.58_{0.01}}$ & $\textit{0.60}_{\textit{0.04}}$ & $0.61_{0.01}$ & $4.34_{0.10}$ & $2.55_{0.32}$\tabularnewline
 & CF. MSE & \textbf{$\boldsymbol{0.72_{0.18}}$} & $\textit{0.73}_{\textit{0.18}}$ & $1.98_{0.86}$ & $1.29_{0.20}$ & $15.28_{1.25}$\tabularnewline
\midrule 
\multirow{3}{1.8cm}{\centering{}Diamond\linebreak{}
NLIN} & Obs. MMD & $\boldsymbol{0.36_{0.02}}$ & $\textit{0.38}_{\textit{0.06}}$ & $0.44_{0.04}$ & $2.26_{4.15}$ & $5.23_{0.35}$\tabularnewline
 & Int. MMD & $\textit{0.41}_{\textit{0.05}}$ & $\textit{0.41}_{\textit{0.06}}$ & \textbf{$\boldsymbol{0.34_{0.02}}$} & $4.49_{8.95}$ & $26.64_{0.63}$\tabularnewline
 & CF. MSE & $\textit{0.04}_{\textit{0.04}}$ & $\boldsymbol{0.03_{0.01}}$ & $16.06_{2.07}$ & $197.27_{438.22}$ & $60.62_{3.45}$\tabularnewline
\midrule 
\multirow{3}{1.8cm}{\centering{}Y\linebreak{}
NADD} & Obs. MMD & $\boldsymbol{0.46_{0.05}}$ & $\boldsymbol{0.46_{0.05}}$ & $\textit{0.55}_{\textit{0.07}}$ & $1.01_{0.05}$ & $1.68_{0.08}$\tabularnewline
 & Int. MMD & $\boldsymbol{0.48_{0.03}}$ & $\textit{0.51}_{\textit{0.04}}$ & $1.75_{0.70}$ & $1.56_{0.13}$ & $4.44_{0.20}$\tabularnewline
 & CF. MSE & $\boldsymbol{28.58_{1.04}}$ & $\textit{29.67}_{\textit{0.84}}$ & $37.31_{3.32}$ & $30.27_{0.78}$ & $32.45_{0.72}$\tabularnewline
\midrule
\multirow{3}{1.8cm}{\centering{}LargeBD\linebreak{}
NLIN} & Obs. MMD & $\textit{0.40}_{\textit{0.03}}$ & $\boldsymbol{0.39_{0.02}}$ & $0.42_{0.04}$ & $0.97_{0.06}$ & $1.34_{1.14}$\tabularnewline
 & Int. MMD & $\boldsymbol{0.41_{0.02}}$ & $0.48_{0.03}$ & $\textit{0.43}_{\textit{0.03}}$ & $0.99_{0.03}$ & $0.98_{0.06}$\tabularnewline
 & CF. MSE & $\boldsymbol{0.02_{0.00}}$ & $\textit{0.03}_{\textit{0.00}}$ & $0.04_{0.00}$ & $0.04_{0.00}$ & $0.71_{0.01}$\tabularnewline
\midrule
\multirow{3}{1.8cm}{\centering{}LargeLadder\linebreak{}
NLIN} & Obs. MMD & $\boldsymbol{0.63_{0.01}}$ & $\textit{0.64}_{\textit{0.00}}$ & $0.66_{0.04}$ & $0.65_{0.05}$ & -\tabularnewline
 & Int. MMD & $\boldsymbol{0.64_{0.01}}$ & $\textit{0.65}_{\textit{0.00}}$ & $0.66_{0.00}$ & $\textit{0.65}_{\textit{0.00}}$ & -\tabularnewline
 & CF. MSE & $\textit{42.71}_{\textit{3.12}}$ & $46.79_{6.67}$ & $72.63_{5.4}$ & $\boldsymbol{38.46_{1.30}}$ & -\tabularnewline
\midrule
\multirow{3}{1.8cm}{\centering{}LargeLadder\linebreak{}
NADD} & Obs. MMD & $\boldsymbol{0.58_{0.06}}$ & $\textit{0.61}_{\textit{0.02}}$ & $0.62_{0.03}$ & $0.71_{0.05}$ & -\tabularnewline
 & Int. MMD & $\boldsymbol{0.62_{0.00}}$ & $\textit{0.63}_{\textit{0.00}}$ & $\boldsymbol{0.62_{0.00}}$ & $0.68_{0.00}$ & -\tabularnewline
 & CF. MSE & $\boldsymbol{27.85_{1.39}}$ & $\textit{29.09}_{\textit{0.29}}$ & $36.23_{3.49}$ & $37.42_{1.94}$ & -\tabularnewline
\bottomrule
\end{tabular}
\par\end{centering}
\caption{Mean and standard deviation of five SCMs with non-linear (NLIN) and
non-additive (NADD) structural equations. Bold indicates the best
result; italic indicates the second-best.  \label{tab:synthetic-results}}
\end{table*}

\begin{algorithm}
\caption{P-CFM Model Training.\label{alg:P-CDM-Model-Training}}

\textbf{Input}: Observational data $\mathcal{X}$ , causal ordering
$\pi$, exogenous distribution $p(u)$

\begin{algorithmic}[1]

\While{not converged} \Comment{Training S-CFM}
\State $z_0 = x \sim \mathcal{X}$; ~$z_1 = u \sim p(u)$; ~$t \sim  \text{Unif}[0,1]$; ~$z_t = (1-t)\cdot z_0 + t\cdot z_1$
\State Update parameters of velocity neural network $v_\theta$, by minimizing the following loss: \[ \left\Vert v_{\theta}\left( z_{t},x,t\right) - \left( z_{1} - z_{0} \right) \right\Vert ^{2} \]
\EndWhile
\newline

\While{not converged} \Comment{Training Endogenous Predictor}
\State $t \sim  \text{Unif}[0,1]$; ~$z_0=x \sim \mathcal{X}$
\State Sample $z_t,z_1$ given $z_0$ from S-CFM
\State Update parameters of endogenous predictor neural network $\text{NN}_{\theta}$, by minimizing the following loss: \[ \left\Vert \text{EP}_{\theta}\left(z_{t},z_{1},t\right)-z_{0}\right\Vert ^{2} \] 
\EndWhile

\end{algorithmic}
\end{algorithm}

\paragraph{Do operator}

To enable intervention and counterfactual calculations, we modify
Pearl's \emph{do}-operator $do\left(x^{i}=\alpha\right)$ differently
for S-CFM and P-CFM. In S-CFM, we replace the $i$-th causal mechanism
$f^{i}$ with a constant function, $f_{I}^{i}=\alpha$. For P-CFM,
we adopt the backtracking counterfactual approach \cite{von2023backtracking},
which modifies the exogenous distribution $p\left(u\right)$ while
maintaining causal mechanisms $f$. An intervention $do(x^{i}=\alpha)$
updates $p(u)$ by restricting plausible $u$ values to those yielding
the intervened value $\alpha$. The intervened SCM is defined as $\mathcal{M}^{\mathcal{I}}=\left(f,p_{I}(u)\right)$,
with $p_{I}(u)$ density determined by:

\begin{equation}
p_{I}(u)=\delta\left(\left\{ u^{i}:f_{i}\left(x^{\text{pa}_{i}},u^{i}\right)=\alpha\right\} \right)\cdot\prod_{j\neq i}p\left(u^{j}\right).
\end{equation}
During the prediction step, the endogenous predictor uses intervened
exogenous values to predict endogenous values. Traditional \emph{do}-operators
require finding specific set $\left(u^{i},x^{\text{pa}_{i}}\right)$
that causes the intervened value $\alpha$, necessitating an additional
abduction step that can increase complexity from $O(1)$ to $O(nd)$
during iterative inference. In our approach where the abduction step
can be run in parallel, this operator is computationally cheaper with
a complexity of $O(n)$. As a result, our overall complexity for observational,
interventional and counterfactual queries are $O\left(n\right)$.
More specificially, these queries are detailed as follows:

\begin{itemize}
\item The observation is sampled from $p\left(x\right)$ as $x=\text{Predict}(u)$
for $\ensuremath{u\sim p\left(u\right)}$.
\item The interventional sampling from $p\left(x\mid\text{do}\left(x^{i}=\alpha\right)\right)$
involves the steps: $x=\text{Predict}(u)$ for $\ensuremath{u\sim p\left(u\right)}$,
followed by $x^{i}=\alpha,$ $u^{i}=\text{Abduct}\left(x\right)^{i}$,
and $x^{\text{int}}=\text{Predict}(u)$
\item The counterfactual sampling from $p\left(x^{\text{cf}}|\text{do}\left(x^{i}=\alpha\right),x^{\text{f}}\right)$
involves the steps: $u=\text{Abduct}(x^{\text{f}})$, $x^{i,\text{f}}=\alpha$,
$u^{i}=\text{Abduct}(x^{\text{f}})^{i}$, followed by $x^{\text{cf}}=\text{Predict}(u)$.
\end{itemize}
\begin{algorithm}
\caption{$\text{Predict}(u)$\label{alg:predict}}

\textbf{Input}: $u$, sequence of times $0=t_{1}<t_{2}<...<t_{N-1}<t_{N}=1$

\begin{algorithmic}[1] 

\State $z_1=u$

\For{$n=N \pmb{\text{ to }} 2$}
\State $\hat{z}_0 = \text{EP}_{\theta}\left(z_{1},z_{t_{n}},t_n\right)$ \Comment{Predict endogenous values}
\State $v = v_{\theta}( z_{t_n}, \hat{z}_0, t_n)$
\State $z_{t_{n-1}}=z_{t_n} - v\cdot (t_{n} - t_{n-1})$
\EndFor
\State $\text{Return } z_0$
\end{algorithmic}
\end{algorithm}
\begin{algorithm}
\caption{$\text{Abduct}\left(x\right)$\label{alg:abduct}}

\textbf{Input}: $x$, sequence of times $0=t_{1}<t_{2}<...<t_{N-1}<t_{N}=1$

\begin{algorithmic}[1] 

\State $z_0=x$

\For{$n=1 \pmb{\text{ to }} N-1$}
\State $v = v_{\theta}( z_{t_n}, z_0, t_n)$
\State $z_{t_{n+1}}=z_{t_n} + v\cdot (t_{n+1} - t_{n})$
\EndFor
\State $\text{Return } z_1$
\end{algorithmic}
\end{algorithm}

\section{Experiments}

We evaluate the performance of the proposed sequential and parallel
CFM on both synthetic and real-world dataset, as well as a major
speed up by parallelization. 

\subsection{Settings}

\paragraph*{Implementation}

For MAVEN and endogenous predictor $\text{EP}_{\theta}$, we use a
MADE with three hidden layers $\left[256,256,256\right]$ and ELU
activation, and a fully connected neural network with the same layers
and activation. We use the Adam optimizer with a learning rate of
0.001 and apply a decay factor of 0.95 to the learning rate if it
remains at a plateau for more than 10 epochs, batch size of 2048,
and train for 900 epochs. Inference process is done in 50 steps.

\paragraph{Synthetic datasets}

We consider synthetic SCMs various in the number of nodes and edges
as they allow us to have direct access to the true observational,
interventional and counterfactual distributions to evaluate methods.
We study two main classes of structural equations: 

\emph{\textgreater Non-linear additive noise} (NLIN): \textbf{Simpson},
a 4-node SCM simulating a Simpson\textquoteright s paradox that is
difficult to approximate. \textbf{Diamond}, a 4-node SCM with 4 egdes
arranged in a diamond shape. \textbf{Large Backdoor }(LargeBD), a
9-node non-Gaussian noise with sparse causal graph.

\emph{\textgreater Non-additive noise} (NADD): \textbf{Triangle},
a 3-node SCM with confounding graph in which $x^{1}$ is the parent
of $x^{2},x^{3}$ and $x^{2}$ causes $x^{3}$. \textbf{Y}, a 4-node
Y-shaped SCM.

We randomly sample 50,000/5,000/5,000 samples for training, evaluation
and testing, respectively. See appendix for more details on the structural
equations and all results.

\begin{figure*}
\begin{centering}
\includegraphics[scale=0.77]{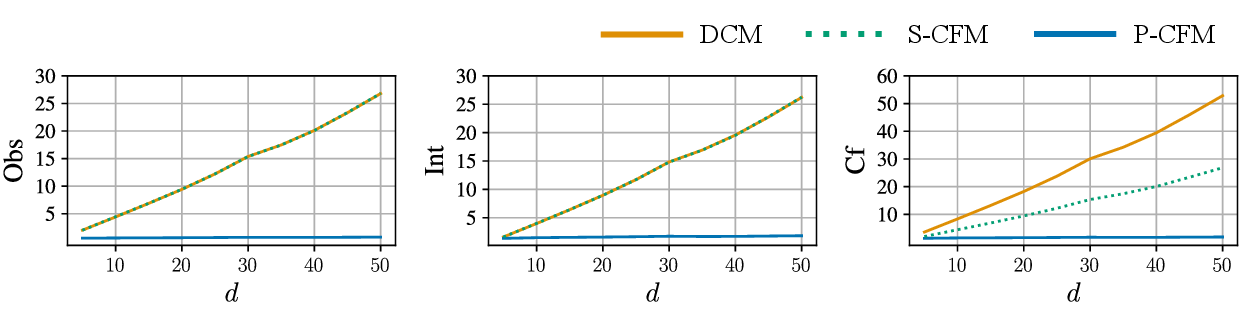}
\par\end{centering}
\caption{Inference time (in seconds) against number of nodes $d$  for 10
samples with 50 discretization steps on \textbf{Obs}ervational, \textbf{Int}erventional,
and \textbf{C}ounter\textbf{f}actual queries, for data generated using
linear structural equations, ranging from $5$ to $50$ nodes.\label{fig:scalability}}
\end{figure*}

\paragraph{Baselines}

For the synthetic experiments, we compare our methods with two recently
state-of-the-art methods Causal Normalizing Flows (CausalNF) \cite{javaloy2023causal}
and VACA \cite{SanchezMartin2022}. For CausalNF, we use two types
of flow architectures: Masked Autoregressive Flow (MAF) \cite{papamakarios2017masked}
and Neural Spline Flow (NSF) \cite{durkan2019neural}. As a requirements
of VACA, we use fully observed causal graph for training in testing
VACA. Due to the differences in settings, we do not show comparison
between our methods and Diffusion-based Causal Models (DCM) \cite{chao2023interventional},
which requires fully observed causal graph. When only the causal
ordering is observed, training the DCM requires either multiple random
selections of parents for each node or full causal graph discovery,
which are both expensive. We re-implement these methods based on
provided code. For a fair comparison, each model uses the same budget
for optimization. For the real datasets, we compare with CausalNF-MAF,
CausalNF-NSF, CAREFL \cite{khemakhem2021causal}, regression function
from an additive noise model (ANM) \cite{hoyer2008nonlinear} and
standard ridge regression model (Linear SCM).

\paragraph{Metrics}

To evaluate the methods on the three-layer causal hierarchy, we report
the Maximum Mean Discrepancy (MMD) \cite{gretton2012kernel} between
the true and the estimated observational and interventional distributions.
We do not evaluate the Average Treatment Effect (ATE) as in \cite{javaloy2023causal},
as the metric does not provide insights into the distribution of treatment
effects within the population. For counterfactual estimation, we report
the mean squared error (MSE) between the actual and estimated counterfactual
values.

\subsection{Results on Synthetic Datasets}

Table~\ref{tab:synthetic-results} reports the experimental results
on five synthetic SCMs for observational, interventional and counterfactual
questions. Each model's performance is averaged over five random initializations
of parameters and training datasets. VACA demonstrates poor performance
across all datasets, which can be attributed to the limitations of
the encoder-decoder architectures and the constraints of the graph
neural network. CausalNF-NSF and CausalNF-MAF show competitive results
compared to ours. However, due to the affine constraints of the Masked
Autoregressive Flow, CausalNF-MAF only learns an affine transformation
between exogenous and endogenous distributions, performing well in
cases with additive structural equations but failing in non-additive
cases where the endogenous distribution is not Gaussian. CausalNF-NSF
closely approximates observational and interventional distributions
in both additive and non-additive cases but struggles with answering
counterfactual questions.

Our methods exhibit a significantly higher and more consistent performance
compared to existing methods across the majority of queries, over
both additive and non-additive relationships. Specifically, P-CFM,
despite approximating endogenous values $x$, demonstrates a superior
performance relative to previous methods. 

\paragraph{Scalability}

Fig.~\ref{fig:scalability} demonstrates the scalability of P-CFM
against previous Diffusion-based methods such as DCM, and our own
S-CFM, all using the same number of parameters. DCM's inference time
grows linearly with the number of nodes $d$. S-CFM significantly
reduces counterfactual generation time through parallel abduction.
In contrast, P-CFM maintains a nearly constant inference time regardless
of $d$. This result highlights that P-CFM is not only accurate but
also efficient, making it suitable for larger problems.

\subsection{Results on Real Dataset}

\begin{table}[h]
\begin{centering}
\begin{tabular}{lc}
\toprule 
Algorithm & Median Abs. Error\tabularnewline
\midrule
\midrule 
S-CFM & $\boldsymbol{0.5829_{5.0\text{e-}2}}$\tabularnewline
CausalNF-NSF & $0.6018_{2.0\text{e-}2}$\tabularnewline
CausalNF-MAF & $0.6025_{3.0\text{e-}2}$\tabularnewline
CAREFL & $\boldsymbol{0.5983_{3.0\text{e-}2}}$\tabularnewline
ANM & $0.6746_{4.8\text{e-}8}$\tabularnewline
Linear SCM & $0.6045_{0.0}$\tabularnewline
\bottomrule
\end{tabular}
\par\end{centering}
\caption{Interventional prediction's median absolute error on fMRI data. The
results for CAREFL, ANM, and Linear SCM are obtained from Chao et
al. (2023) \label{tab:real-results}}
\end{table}

We evaluate S-CFM on real-world setting on the electrical stimulation
interventional fMRI data \cite{thompson2020data}, experimental setup
is obtained from \cite{khemakhem2021causal}. We do not evaluate P-CFM
since it does not show much advantages in case of only two nodes.
The fMRI data includes time series from the Cingulate Gyrus (CG) and
Heschl\textquoteright s Gyrus (HG) for 14 patients with medically
refractory epilepsy. The underlying simplified causal structure is
a bivariate graph with CG $\rightarrow$ HG. Our task is to predict
value of HG given intervened CG.

Table~\ref{tab:real-results} reports the interventional results
on fMRI datasets. S-CFM demonstrates slightly better performance compared
to the alternatives. This is because the evaluation is based on the
absolute error of a single intervention, which does not account for
distributional information. Additionally, the causal ordering is consistent
with the causal structure in the bivariate case. Nonetheless, the
experiment provides a benchmark for comparing various causal inference
algorithms using real datasets.

\section{Conclusion}

We have introduced a new scalable method of causal learning and inference
using flow models: We proved that a flow-based models can learn identifiable
SCMs from only observational data and causal ordering. To make the
models efficient, we designed a masked autoregressive velocity network
and an endogenous predictor that enables parallel inference. We empirically
demonstrated that our method outperforms SOTA rivals on a wide range
of SCMs over non-additive and non-linear relationships. We showed
that our parallel causal flow model is scalable, maintaining a nearly
constant inference time regardless of the number of variables. We
validated our method on real-world datasets, including an fMRI study,
showcasing its practical applicability.

\bibliography{aaai25}

\begin{thebibliography}{42}
\providecommand{\natexlab}[1]{#1}

\bibitem[{Albergo, Boffi, and Vanden-Eijnden(2023)}]{albergo2023stochastic}
Albergo, M.~S.; Boffi, N.~M.; and Vanden-Eijnden, E. 2023.
\newblock Stochastic interpolants: A unifying framework for flows and
  diffusions.
\newblock \emph{arXiv preprint arXiv:2303.08797}.

\bibitem[{Castro et~al.(2019)Castro, Tan, Kainz, Konukoglu, and
  Glocker}]{castro2019morpho}
Castro, D.~C.; Tan, J.; Kainz, B.; Konukoglu, E.; and Glocker, B. 2019.
\newblock Morpho-MNIST: Quantitative assessment and diagnostics for
  representation learning.
\newblock \emph{Journal of Machine Learning Research}, 20(178): 1--29.

\bibitem[{Chao et~al.(2024)Chao, Bl{\"o}baum, Patel, and
  Kasiviswanathan}]{chao2023interventional}
Chao, P.; Bl{\"o}baum, P.; Patel, S.~K.; and Kasiviswanathan, S. 2024.
\newblock Modeling Causal Mechanisms with Diffusion Models for Interventional
  and Counterfactual Queries.

\bibitem[{Chen et~al.(2018)Chen, Rubanova, Bettencourt, and
  Duvenaud}]{chen2018neural}
Chen, R.~T.; Rubanova, Y.; Bettencourt, J.; and Duvenaud, D.~K. 2018.
\newblock Neural ordinary differential equations.
\newblock \emph{Advances in neural information processing systems}, 31.

\bibitem[{De~Sousa~Ribeiro et~al.(2023)De~Sousa~Ribeiro, Xia, Monteiro,
  Pawlowski, and Glocker}]{pmlr-v202-de-sousa-ribeiro23a}
De~Sousa~Ribeiro, F.; Xia, T.; Monteiro, M.; Pawlowski, N.; and Glocker, B.
  2023.
\newblock High Fidelity Image Counterfactuals with Probabilistic Causal Models.
\newblock In \emph{Proceedings of the 40th International Conference on Machine
  Learning}, volume 202 of \emph{Proceedings of Machine Learning Research},
  7390--7425.

\bibitem[{Dhariwal and Nichol(2021)}]{dhariwal2021diffusion}
Dhariwal, P.; and Nichol, A. 2021.
\newblock Diffusion models beat gans on image synthesis.
\newblock \emph{Advances in neural information processing systems}, 34:
  8780--8794.

\bibitem[{Durkan et~al.(2019)Durkan, Bekasov, Murray, and
  Papamakarios}]{durkan2019neural}
Durkan, C.; Bekasov, A.; Murray, I.; and Papamakarios, G. 2019.
\newblock Neural spline flows.
\newblock \emph{Advances in neural information processing systems}, 32.

\bibitem[{Geffner et~al.(2022)Geffner, Antoran, Foster, Gong, Ma, Kiciman,
  Sharma, Lamb, Kukla, Pawlowski et~al.}]{geffner2022deep}
Geffner, T.; Antoran, J.; Foster, A.; Gong, W.; Ma, C.; Kiciman, E.; Sharma,
  A.; Lamb, A.; Kukla, M.; Pawlowski, N.; et~al. 2022.
\newblock Deep end-to-end causal inference.
\newblock \emph{arXiv preprint arXiv:2202.02195}.

\bibitem[{Germain et~al.(2015)Germain, Gregor, Murray, and
  Larochelle}]{germain2015made}
Germain, M.; Gregor, K.; Murray, I.; and Larochelle, H. 2015.
\newblock Made: Masked autoencoder for distribution estimation.
\newblock In \emph{International conference on machine learning}, 881--889.
  PMLR.

\bibitem[{Gretton et~al.(2012)Gretton, Borgwardt, Rasch, Sch{\"o}lkopf, and
  Smola}]{gretton2012kernel}
Gretton, A.; Borgwardt, K.~M.; Rasch, M.~J.; Sch{\"o}lkopf, B.; and Smola, A.
  2012.
\newblock A kernel two-sample test.
\newblock \emph{The Journal of Machine Learning Research}, 13(1): 723--773.

\bibitem[{Hansen and Sokol(2014)}]{hansen2014causal}
Hansen, N.; and Sokol, A. 2014.
\newblock Causal interpretation of stochastic differential equations.

\bibitem[{He et~al.(2016)He, Zhang, Ren, and Sun}]{he2016deep}
He, K.; Zhang, X.; Ren, S.; and Sun, J. 2016.
\newblock Deep residual learning for image recognition.
\newblock In \emph{Proceedings of the IEEE conference on computer vision and
  pattern recognition}, 770--778.

\bibitem[{Hoyer et~al.(2008)Hoyer, Janzing, Mooij, Peters, and
  Sch{\"o}lkopf}]{hoyer2008nonlinear}
Hoyer, P.; Janzing, D.; Mooij, J.~M.; Peters, J.; and Sch{\"o}lkopf, B. 2008.
\newblock Nonlinear causal discovery with additive noise models.
\newblock \emph{Advances in neural information processing systems}, 21.

\bibitem[{Hyv{\"a}rinen and Dayan(2005)}]{hyvarinen2005estimation}
Hyv{\"a}rinen, A.; and Dayan, P. 2005.
\newblock Estimation of non-normalized statistical models by score matching.
\newblock \emph{Journal of Machine Learning Research}, 6(4).

\bibitem[{Jaini, Selby, and Yu(2019)}]{jaini2019sum}
Jaini, P.; Selby, K.~A.; and Yu, Y. 2019.
\newblock Sum-of-squares polynomial flow.
\newblock In \emph{International Conference on Machine Learning}, 3009--3018.
  PMLR.

\bibitem[{Javaloy, Martin, and Valera(2023)}]{javaloy2023causal}
Javaloy, A.; Martin, P.~S.; and Valera, I. 2023.
\newblock Causal normalizing flows: from theory to practice.
\newblock In \emph{Thirty-seventh Conference on Neural Information Processing
  Systems}.

\bibitem[{Karimi et~al.(2020)Karimi, Von~K{\"u}gelgen, Sch{\"o}lkopf, and
  Valera}]{Karimi2020}
Karimi, A.-H.; Von~K{\"u}gelgen, J.; Sch{\"o}lkopf, B.; and Valera, I. 2020.
\newblock Algorithmic recourse under imperfect causal knowledge: a
  probabilistic approach.
\newblock \emph{Advances in neural information processing systems}, 33:
  265--277.

\bibitem[{Khemakhem et~al.(2021)Khemakhem, Monti, Leech, and
  Hyvarinen}]{khemakhem2021causal}
Khemakhem, I.; Monti, R.; Leech, R.; and Hyvarinen, A. 2021.
\newblock Causal Autoregressive Flows.
\newblock In Banerjee, A.; and Fukumizu, K., eds., \emph{Proceedings of The
  24th International Conference on Artificial Intelligence and Statistics},
  volume 130 of \emph{Proceedings of Machine Learning Research}, 3520--3528.
  PMLR.

\bibitem[{Kim, Papamakarios, and Mnih(2021)}]{kim2021lipschitz}
Kim, H.; Papamakarios, G.; and Mnih, A. 2021.
\newblock The lipschitz constant of self-attention.
\newblock In \emph{International Conference on Machine Learning}, 5562--5571.
  PMLR.

\bibitem[{Kocaoglu et~al.(2017)Kocaoglu, Snyder, Dimakis, and
  Vishwanath}]{kocaoglu2017causalgan}
Kocaoglu, M.; Snyder, C.; Dimakis, A.~G.; and Vishwanath, S. 2017.
\newblock Causalgan: Learning causal implicit generative models with
  adversarial training.
\newblock \emph{arXiv preprint arXiv:1709.02023}.

\bibitem[{LeRoy(2004)}]{leroy2004causality}
LeRoy, S. 2004.
\newblock \emph{Causality in economics}.
\newblock London School of Economics, Centre for Philosophy of Natural and
  Social Sciences.

\bibitem[{Lipman et~al.(2023)Lipman, Chen, Ben-Hamu, Nickel, and
  Le}]{lipman2023flow}
Lipman, Y.; Chen, R.~T.; Ben-Hamu, H.; Nickel, M.; and Le, M. 2023.
\newblock Flow Matching for Generative Modeling.
\newblock In \emph{The Eleventh International Conference on Learning
  Representations}.

\bibitem[{Liu, Gong, and Liu(2023)}]{liu2023flow}
Liu, X.; Gong, C.; and Liu, Q. 2023.
\newblock Flow Straight and Fast: Learning to Generate and Transfer Data with
  Rectified Flow.
\newblock In \emph{The Eleventh International Conference on Learning
  Representations}.

\bibitem[{Mooij, Janzing, and Sch{\"o}lkopf(2013)}]{mooij2013ordinary}
Mooij, J.~M.; Janzing, D.; and Sch{\"o}lkopf, B. 2013.
\newblock From ordinary differential equations to structural causal models: the
  deterministic case.
\newblock \emph{arXiv preprint arXiv:1304.7920}.

\bibitem[{Nguyen et~al.(2023)Nguyen, Do, Nguyen, Duong, and
  Nguyen}]{nguyen2023causal}
Nguyen, T.; Do, K.; Nguyen, D.~T.; Duong, B.; and Nguyen, T. 2023.
\newblock Causal inference via style transfer for out-of-distribution
  generalisation.
\newblock In \emph{Proceedings of the 29th ACM SIGKDD Conference on Knowledge
  Discovery and Data Mining}, 1746--1757.

\bibitem[{Papamakarios, Pavlakou, and Murray(2017)}]{papamakarios2017masked}
Papamakarios, G.; Pavlakou, T.; and Murray, I. 2017.
\newblock Masked autoregressive flow for density estimation.
\newblock \emph{Advances in neural information processing systems}, 30.

\bibitem[{Pawlowski, Coelho~de Castro, and Glocker(2020)}]{Pawlowski2020}
Pawlowski, N.; Coelho~de Castro, D.; and Glocker, B. 2020.
\newblock Deep structural causal models for tractable counterfactual inference.
\newblock \emph{Advances in neural information processing systems}, 33:
  857--869.

\bibitem[{Pearl(2009)}]{Pearl2009}
Pearl, J. 2009.
\newblock \emph{Causality}.
\newblock Cambridge University Press, 2 edition.

\bibitem[{Peters, Bauer, and Pfister(2022)}]{peters2022causal}
Peters, J.; Bauer, S.; and Pfister, N. 2022.
\newblock Causal models for dynamical systems.
\newblock In \emph{Probabilistic and Causal Inference: The Works of Judea
  Pearl}, 671--690.

\bibitem[{Russo and Williamson(2007)}]{russo2007interpreting}
Russo, F.; and Williamson, J. 2007.
\newblock Interpreting causality in the health sciences.
\newblock \emph{International studies in the philosophy of science}, 21(2):
  157--170.

\bibitem[{Sanchez and Tsaftaris(2022)}]{Sanchez2022DiffusionCM}
Sanchez, P.; and Tsaftaris, S.~A. 2022.
\newblock Diffusion Causal Models for Counterfactual Estimation.
\newblock In \emph{CLEaR}.

\bibitem[{S{\'a}nchez-Martin, Rateike, and Valera(2022)}]{SanchezMartin2022}
S{\'a}nchez-Martin, P.; Rateike, M.; and Valera, I. 2022.
\newblock VACA: Designing variational graph autoencoders for causal queries.
\newblock In \emph{Proceedings of the AAAI Conference on Artificial
  Intelligence}, volume~36, 8159--8168.

\bibitem[{Scetbon et~al.(2024)Scetbon, Jennings, Hilmkil, Zhang, and
  Ma}]{scetbon2024fip}
Scetbon, M.; Jennings, J.; Hilmkil, A.; Zhang, C.; and Ma, C. 2024.
\newblock FiP: a Fixed-Point Approach for Causal Generative Modeling.
\newblock \emph{arXiv preprint arXiv:2404.06969}.

\bibitem[{Sch{\"o}lkopf et~al.(2021)Sch{\"o}lkopf, Locatello, Bauer, Ke,
  Kalchbrenner, Goyal, and Bengio}]{scholkopf2021toward}
Sch{\"o}lkopf, B.; Locatello, F.; Bauer, S.; Ke, N.~R.; Kalchbrenner, N.;
  Goyal, A.; and Bengio, Y. 2021.
\newblock Toward causal representation learning.
\newblock \emph{Proceedings of the IEEE}, 109(5): 612--634.

\bibitem[{Simmons(2016)}]{simmons2016differential}
Simmons, G.~F. 2016.
\newblock \emph{Differential equations with applications and historical notes}.
\newblock CRC Press.

\bibitem[{Song and Ermon(2019)}]{song2019generative}
Song, Y.; and Ermon, S. 2019.
\newblock Generative modeling by estimating gradients of the data distribution.
\newblock \emph{Advances in neural information processing systems}, 32.

\bibitem[{Song et~al.(2021)Song, Sohl-Dickstein, Kingma, Kumar, Ermon, and
  Poole}]{song2021scorebased}
Song, Y.; Sohl-Dickstein, J.; Kingma, D.~P.; Kumar, A.; Ermon, S.; and Poole,
  B. 2021.
\newblock Score-Based Generative Modeling through Stochastic Differential
  Equations.
\newblock In \emph{International Conference on Learning Representations}.

\bibitem[{Thompson et~al.(2020)Thompson, Nair, Oya, Esteban, Shine, Petkov,
  Poldrack, Howard, and Adolphs}]{thompson2020data}
Thompson, W.~H.; Nair, R.; Oya, H.; Esteban, O.; Shine, J.; Petkov, C.;
  Poldrack, R.; Howard, M.; and Adolphs, R. 2020.
\newblock A data resource from concurrent intracranial stimulation and
  functional MRI of the human brain.
\newblock \emph{Scientific data}, 7(1): 258.

\bibitem[{Von~K{\"u}gelgen, Mohamed, and Beckers(2023)}]{von2023backtracking}
Von~K{\"u}gelgen, J.; Mohamed, A.; and Beckers, S. 2023.
\newblock Backtracking counterfactuals.
\newblock In \emph{Conference on Causal Learning and Reasoning}, 177--196.
  PMLR.

\bibitem[{Wang, Jennings, and Gong(2023)}]{wang2023neural}
Wang, B.; Jennings, J.; and Gong, W. 2023.
\newblock Neural structure learning with stochastic differential equations.
\newblock \emph{arXiv preprint arXiv:2311.03309}.

\bibitem[{Xi and Bloem-Reddy(2023)}]{Xi2023}
Xi, Q.; and Bloem-Reddy, B. 2023.
\newblock Indeterminacy in generative models: Characterization and strong
  identifiability.
\newblock In \emph{International Conference on Artificial Intelligence and
  Statistics}, 6912--6939. PMLR.

\bibitem[{Zecevic et~al.(2021)Zecevic, Dhami, Velickovic, and
  Kersting}]{Zecevic2021}
Zecevic, M.; Dhami, D.~S.; Velickovic, P.; and Kersting, K. 2021.
\newblock Relating Graph Neural Networks to Structural Causal Models.
\newblock \emph{CoRR}, abs/2109.04173.

\end{thebibliography}

\newpage{}

\appendix
\onecolumn
\section{Missing Details from Method section}

\subsection{Identifiability of TMI map}

Below is the proposition for the identifiability of TMI maps from
\cite{Xi2023}, included herein for the purpose of comprehensiveness.
\begin{prop}
Let $\text{Z}=\text{X}=\mathbb{R}^{d}$. The nonlinear ICA model where
$\mathcal{F}$ are TMI maps and $\mathcal{P}_{z}$ are fully supported
distributions with independent components is identifiable up to invertible,
component-wise transformations.
\end{prop}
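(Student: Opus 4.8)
The plan is to prove identifiability by reducing it to the uniqueness of the monotone triangular (Knothe--Rosenblatt) transport between product measures. Suppose two models in the class, $(f, P_z)$ and $(\tilde f, \tilde P_z)$ with $f, \tilde f \in \mathcal{F}$ TMI maps and $P_z = \prod_{i=1}^{d} P_{z_i}$, $\tilde P_z = \prod_{i=1}^{d} \tilde P_{z_i}$ fully supported with independent components, induce the same observed law on $X$, i.e. $f_{\#} P_z = \tilde f_{\#} \tilde P_z$. I want to show that the two generators agree up to an invertible component-wise reparameterization of the latent: concretely, that there is a map $h = (h_1, \dots, h_d)$ with each $h_i$ an invertible function of $z_i$ alone such that $f = \tilde f \circ h$. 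This is exactly the statement ``identifiable up to invertible, component-wise transformations.''

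First I would form the composition $h := \tilde f^{-1} \circ f$. Because $f$ and $\tilde f$ are TMI maps and the class of monotone increasing triangular maps is closed under inversion and composition (the inverse is solved coordinate by coordinate and stays triangular and increasing; a composition is triangular and increasing in each diagonal entry since $h_i$ as a function of $z_i$ is the composition of two increasing scalar maps with the earlier coordinates frozen), $h$ is itself a TMI map. Moreover $h$ transports $P_z$ to $\tilde P_z$, since $h_{\#} P_z = \tilde f^{-1}_{\#} f_{\#} P_z = \tilde f^{-1}_{\#}(\tilde f_{\#} \tilde P_z) = \tilde P_z$. The whole problem therefore reduces to the claim that a monotone increasing triangular map transporting one fully supported product measure to another must be component-wise.

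I would establish this claim by induction on the coordinate $i$, using uniqueness of one-dimensional monotone transport. The base case is immediate: by triangularity $h_1 = h_1(z_1)$ already depends on $z_1$ alone. For the step, assume $h_1, \dots, h_{i-1}$ are component-wise. Independence of the components of $\tilde P_z$ makes $\tilde z_i = h_i(z_1,\dots,z_i)$ independent of $(\tilde z_1,\dots,\tilde z_{i-1})$, which by the inductive hypothesis and invertibility of $h_1,\dots,h_{i-1}$ is equivalent to $h_i(z_1,\dots,z_i)$ being independent of $(z_1,\dots,z_{i-1})$. Conditioning on $(z_1,\dots,z_{i-1}) = a$ and using that $z_i$ is independent of the earlier coordinates under $P_z$, the conditional law of $h_i(a, z_i)$ is the pushforward of $P_{z_i}$ under the monotone increasing map $z_i \mapsto h_i(a, z_i)$; independence forces this law to equal $\tilde P_{z_i}$ for almost every $a$. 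Since $P_{z_i}$ and $\tilde P_{z_i}$ are fully supported on $\Real$, there is a \emph{unique} monotone increasing transport between them, so $z_i \mapsto h_i(a, z_i)$ coincides for almost every $a$; hence $h_i$ is an invertible function of $z_i$ alone. This completes the induction, shows $h$ is component-wise, and yields $f = \tilde f \circ h$.

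The main obstacle is the rigorous measure-theoretic handling of the inductive step, namely upgrading the probabilistic statement ``$h_i(z_1,\dots,z_i)$ is independent of $(z_1,\dots,z_{i-1})$'' to the \emph{functional} statement ``$h_i$ does not depend on its first $i-1$ arguments.'' This is where full support is essential: it guarantees that conditioning on almost every value of the earlier coordinates is meaningful and that the scalar monotone transport to $\tilde P_{z_i}$ is well-defined and unique, and it lets one pass from the ``almost every $a$'' conclusion to genuine functional independence on the support. The uniqueness of one-dimensional monotone transport between fully supported measures (the inverse-CDF rearrangement, which also underlies the Knothe--Rosenblatt map referenced earlier) is the workhorse lemma and should be invoked precisely at this point.
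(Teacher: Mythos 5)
The paper itself does not prove this proposition: it is quoted verbatim from \cite{Xi2023} (their Proposition 5.2) ``for the purpose of comprehensiveness,'' and the main text invokes it as a black box when proving Theorem~1. So there is no in-paper argument to compare against, and your proposal must stand on its own. On its own terms it is essentially correct, and it reconstructs the standard argument underlying the cited result: set $h = \tilde{f}^{-1}\circ f$, use closure of strictly increasing triangular bijections under composition and inversion to see $h$ is TMI, observe $h_{\#}P_{z}=\tilde{P}_{z}$, and then argue coordinate by coordinate that a TMI transport between product measures must be component-wise, via uniqueness of the one-dimensional monotone (inverse-CDF) rearrangement. This is precisely the Knothe--Rosenblatt uniqueness mechanism the paper gestures at in its preliminaries when citing Jaini et al. Two hypotheses you use silently should be made explicit. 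First, ``fully supported'' alone does not exclude atoms, and uniqueness of the scalar monotone transport requires the source law to be atomless (e.g., to admit a density); in the setting of \cite{Xi2023} and in this paper's KR discussion, strictly positive densities are assumed, so you should state that assumption where you invoke the rearrangement lemma. Second, forming $\tilde{f}^{-1}$ requires the maps in $\mathcal{F}$ to be bijections of $\mathbb{R}^{d}$ (strict monotonicity and surjectivity in each diagonal coordinate), which is implicit in the model class $\mathrm{Z}=\mathrm{X}=\mathbb{R}^{d}$ but worth flagging. With those two points added, your induction and the almost-everywhere-to-functional upgrade (which, as you note, is where full support does its work, and which in any case only needs to hold up to null sets for an identifiability claim) go through.
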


\subsection{Picard-Lindelof theorem}

The Picard-Lindelof theorem, as discussed by \cite{simmons2016differential}
states the specific conditions under which an initial value problem
guarantees a unique solution. We include it here for the completeness
of the theoretical foundations.
\begin{thm}
Let $f\left(x,y\right)$ be a continuous function that satisfies a
Lipschitz condition
\[
\left|f\left(x,y_{1}\right)-f\left(x,y_{2}\right)\right|\leq K\left|y_{1}-y_{2}\right|
\]
 on a strip defined by $a\leq x\leq b$ and $-\infty<y<\infty$. If
$(x_{0},y_{0})$ is any point of the strip, then the initial value
problem
\[
y'=f\left(x,y\right),\quad y\left(x_{0}\right)=y_{0}
\]
has one and only one solution $y=y\left(x\right)$ on the interval
$a\leq x\leq b$.
\end{thm}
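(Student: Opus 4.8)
The plan is to recast the initial value problem as an equivalent integral equation and then solve it by the method of successive approximations (Picard iteration), which is exactly the fixed-point argument underlying the theorem's name. First I would observe that, because $f$ is continuous, a continuous function $y$ on $[a,b]$ satisfies $y'=f(x,y)$ with $y(x_0)=y_0$ if and only if it satisfies the integral equation $y(x)=y_0+\int_{x_0}^{x} f(t,y(t))\,dt$; the forward direction follows by integrating, and the reverse by differentiating via the fundamental theorem of calculus. This reduces both existence and uniqueness to showing that the operator $(Ty)(x)=y_0+\int_{x_0}^{x} f(t,y(t))\,dt$ has a unique fixed point on the space of continuous functions on $[a,b]$.

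For existence I would define the Picard iterates $y_0(x)\equiv y_0$ and $y_{n+1}=Ty_n$, and control their increments. Since $f(\cdot,y_0)$ is continuous on the compact interval $[a,b]$, it is bounded there by some $M$, giving $|y_1(x)-y_0|\le M|x-x_0|$. Applying the Lipschitz condition under the integral sign and arguing by induction yields the decaying bound
\[
|y_{n+1}(x)-y_n(x)|\le M\,K^{n}\,\frac{|x-x_0|^{n+1}}{(n+1)!}\le M\,K^{n}\,\frac{(b-a)^{n+1}}{(n+1)!}.
\]
Because these are the terms of a convergent exponential series, the Weierstrass $M$-test shows that $\sum_n (y_{n+1}-y_n)$ converges uniformly on $[a,b]$, so $y_n$ converges uniformly to a continuous limit $y^{\ast}$. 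Uniform convergence together with the Lipschitz continuity of $f$ in its second argument then lets me pass the limit inside the integral, so $y^{\ast}=Ty^{\ast}$; hence $y^{\ast}$ solves the integral equation and therefore the IVP on all of $[a,b]$.

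For uniqueness I would suppose $y$ and $\tilde y$ are two solutions and set $\phi(x)=|y(x)-\tilde y(x)|$. Subtracting the two integral equations and using the Lipschitz bound gives $\phi(x)\le K\int_{x_0}^{x}\phi(t)\,dt$, and Gr\"onwall's inequality (or, equivalently, iterating this bound to obtain $\phi(x)\le (K^{n}|x-x_0|^{n}/n!)\sup\phi\to 0$) forces $\phi\equiv 0$, so the solution is unique.

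The step I expect to require the most care is obtaining a \emph{single} uniformly convergent iteration on the entire interval $[a,b]$ rather than only on a short subinterval. This is precisely where the hypothesis that the Lipschitz condition holds globally on the whole strip $a\le x\le b$, $-\infty<y<\infty$ is essential: it lets me bound $|f(t,y_n(t))-f(t,y_{n-1}(t))|$ by $K|y_n(t)-y_{n-1}(t)|$ regardless of how large the iterates grow, so that $f$ need not be bounded on the strip and no local restriction of the domain is needed. The accompanying technical point is justifying the interchange of limit and integral in passing from $y_n=Ty_{n-1}$ to $y^{\ast}=Ty^{\ast}$, which is legitimate because $f(t,y_n(t))\to f(t,y^{\ast}(t))$ uniformly in $t$ by the same Lipschitz estimate and the uniform convergence of $y_n$.
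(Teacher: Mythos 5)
Your proposal is correct: the reduction to the integral equation, the Picard iterates with the factorial bound $MK^{n}|x-x_{0}|^{n+1}/(n+1)!$, the Weierstrass $M$-test, the Lipschitz-justified passage of the limit under the integral, and the Gr\"onwall-style uniqueness step are all sound, and your closing observation --- that the Lipschitz condition holding on the \emph{entire} infinite strip is what allows a single iteration on all of $[a,b]$ with no boundedness assumption on $f$ and no shrinking of the interval, unlike the local rectangle version --- is exactly the right point of care for this form of the theorem. Note, though, that the paper offers no proof to compare against: this is the classical Picard--Lindel\"of theorem, reproduced verbatim in the appendix from \cite{simmons2016differential} purely as a supporting ingredient for the paper's Theorem~1, and your successive-approximations argument is the standard textbook proof of precisely this statement.
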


\subsection{Lipschitz continuity of the velocity network\label{subsec:Lipschitz-continuity-of-the-velocity-network}}

Recalling that the velocity for node $i$, $v_{\theta}\left(z_{t}^{i},x^{<\pi_{i}},t\right)$,
is modeled as follows:
\begin{align*}
c^{<\pi_{i}} & =\text{MADE}\left(x^{<\pi_{i}}\right)\\
v_{t}^{i} & =\text{MLP\ensuremath{\left(\left[z_{t}^{i},c^{<\pi_{i}},t\right]\right)}}
\end{align*}
where $\left[a,b\right]$ denotes the concatenation of $a$ and $b$.
The MLP is a feed-forward neural network with $L$ layers, where each
layer $\ell$ ($1\leq\ell\leq L$) has finite weights $W_{\ell}$,
which can be enforced via weight regularization during training, and
uses a 1-Lipschitz continuous activation function $\sigma_{\ell}$
(e.g., sigmoid, tanh, ReLU, ELU). According to the theoretical result
in \cite{kim2021lipschitz} (Corollary 2.1), the Lipschitz constant
of the MLP is given by: 

\begin{align*}
\text{Lip}\left(\text{MLP}\right) & =\text{Lip}\left(W_{L}\circ\sigma_{L-1}\circ W_{L-1}\circ...\circ\sigma_{1}\circ W_{1}\right)\\
 & \leq\prod_{\ell=1}^{L}\text{Lip}\left(W_{\ell}\right)
\end{align*}
where $\text{Lip}\left(f\right)$ denotes the Lipschitz constant of
the function $f$. In the case $f$ is a matrix $W\in\mathbb{R}^{m\times n}$,
$\text{Lip}\left(W\right)=\left\Vert W\right\Vert _{2}:=\sup_{\left\Vert a\right\Vert _{2}=1}\left\Vert Wa\right\Vert _{2}$.
Since the matrices $W_{1}$, $W_{2}$,..., $W_{L}$ have finite values,
their Lipschitz constants are finite. This implies that the Lipschitz
constant of the MLP is finite. In other words, $v_{t}^{i}$ is a Lipschitz
continuous function of $z_{t}^{i}$.

\subsection{Algorithms}

\paragraph{Observational/Interventional generation}

To generate observational/interventional samples, we first sample
from the latent distribution $u\sim p\left(u\right)$. In case of
doing intervention $p\left(x|\text{do}\left(x^{i}=\alpha\right)\right)$,
we change the value of latent by doing $x=\text{Predict\ensuremath{\left(u\right)},}\;x^{i}=\alpha,\;u^{i}=\text{Abduct}\left(x\right)^{i}$.
However, if we have access to observational data, we can skip the
prediction part of intervention. Then, we predict the value of exogenous
variables based on generated latent $x^{\text{int}}=\text{Predict\ensuremath{\left(u\right)}}$.
See Alg.~\ref{alg:Observational-Sampling} for observational sampling
algorithm, Alg.~\ref{alg:Interventional-Sampling} for interventional
sampling algorithm. 

\paragraph{Counterfactual generation}

The counterfactual generation process is the same with interventional
generation except that instead of sample the latent from latent distribution
$p\left(u\right)$, we acquire latent from factual endogenous values
$u=\text{Abduct}\left(x^{\text{f}}\right)$. See Alg.~\ref{alg:Counterfactual-Sampling}
for counterfactual sampling algorithm.

\begin{algorithm}
\caption{Observational Sampling\label{alg:Observational-Sampling}}

\begin{algorithmic}[1] 

\State Sample $u \sim P_u$
\State $x = \text{Predict}(u)$
\State $\text{Return } x$
\end{algorithmic}
\end{algorithm}

\begin{algorithm}
\caption{Interventional Sampling\label{alg:Interventional-Sampling}}

\textbf{Input}: Intervention $p\left(x|\text{do}\left(x^{i}=\alpha\right)\right)$

\begin{algorithmic}[1] 

\State Sample $u \sim P_u$
\State $x = \text{Predict}(u)$
\State $x^i = \alpha$
\State $u^i = \text{Abduct}(x)^i$
\State $x = \text{Predict}(u^i)$
\State $\text{Return } x$
\end{algorithmic}
\end{algorithm}

\begin{algorithm}
\caption{Counterfactual Sampling\label{alg:Counterfactual-Sampling}}

\textbf{Input}: Intervention $p\left(x^{\text{cf}}|\text{do}\left(x^{i}=\alpha\right),x^{\text{f}}\right)$

\begin{algorithmic}[1] 

\State $u = \text{Abduct}(x^\text{f}))$
\State $x^{i,\text{f}} = \alpha$
\State $u^i = \text{Abduct}(x^{\text{f}})^i$
\State $x^{\text{cf}} = \text{Predict}(u)$
\State $\text{Return } x^{\text{cf}}$
\end{algorithmic}
\end{algorithm}

\section{Data denoising }

We consider another training strategy in which we try to predict $x=z_{0}$
from $z_{t}$ and $x^{\text{<\ensuremath{\pi_{i}}}}$ via objective
function:
\[
\mathcal{L}\left(\theta\right)=\Expect_{t}\Expect_{x,u}\left\Vert v_{\theta}\left(z_{t},x,t\right)-x\right\Vert _{2}^{2}+\lambda\left\Vert \theta\right\Vert _{2}^{2}.
\]
Velocity of the ODE is approximated by the equation:
\[
v_{t}=\frac{z_{t}-v_{\theta}\left(z_{t},x,t\right)}{t}.
\]
However, this approximation suffers from the denominator, causing
high variance where $t$ is near 0. Empirically, we cap $t$ between
$\left[5e^{-2},1\right]$. The performance of the method nearly as
good as velocity matching in observational and interventional metrics,
but worse in counterfactual metric.

\section{Causal inference experiments}

In this section, we provide complete experimental setups and provide
additional results that can be shown in the main paper due to page
limitation.

\subsection{Training setup}

\paragraph{Hardware}

All experiment ran on 8 cores of Intel(R) Xeon(R) Gold 6248 CPU with
32GB RAM. For scalability experiment, all models are evaluated sequentially
to ensure fairness. 

\paragraph{Model}

We use the Masked Autoregressive Neural Network followed by a simple
MLP. Our experiments show that even with more complex architectures
like U-net, the performance improvement over the MLP is minimal. Another
alternative, the affine transformation, offers faster computation
but is limited in its approximation capabilities. Since our S-CDM
remains a TMI map regardless of the model architecture, we chose MLP,
which balances computational efficiency and approximation power. For
encoding the time step $t$, we experimented with two methods: simple
concatenation to the input and positional embedding. Both approaches
yielded equivalent performance.

\paragraph{Training}

For each model, we ran five experiments and average over runs. We
randomly initialize the parameters using a uniform distribution to
ensure diversity and minimize initialization bias. We generate 50000
samples for training, 5000 for evaluation and 5000 for testing. For
the optimization, we use the Adam optimizer with a learning rate of
0.001 and apply a decay factor of 0.95 to the learning rate if it
remains at a plateau for more than 10 epochs, batch size of 2048,
and train for 900 epochs. Abduction and prediction process are done
using 50 steps.

\paragraph{Datasets}

We use two types of SCMs: additive noise and non-additive noise. The
exogenous distributions are mainly standard Gaussian $\mathcal{N}\left(0,1\right)$
except for Large Backdoor. We define the function $t\left(x\right)=\begin{cases}
\text{tanh}\left(x\right)-1 & \text{if}\ x\geq0\\
\text{tanh}\left(x\right)+1 & \text{if}\ x<0
\end{cases}$. The structural equations for them are defined as below.
\begin{enumerate}
\item Triangle, Nonlinear\textbf{
\begin{align*}
f^{1}\left(u^{1}\right) & =u^{1}\\
f^{2}\left(x^{1},u^{2}\right) & =2\cdot\left(x^{1}\right)^{2}+u^{2}\\
f^{3}\left(x^{1},x^{2},u^{3}\right) & =\frac{20}{1+\exp\left(-\left(x^{2}\right)^{2}+x^{1}\right)}+u^{3}
\end{align*}
}
\item Simpson, Nonlinear
\begin{align*}
f^{1}\left(u^{1}\right) & =u^{1}\\
f^{2}\left(x^{1},u^{2}\right) & =\text{softplus}\left(1-x^{1}\right)+\sqrt{\frac{3}{20}}\cdot u^{2}\\
f^{3}\left(x^{1},x^{2},u^{3}\right) & =\text{tanh}\left(2\cdot x^{2}\right)+\frac{3}{2}\cdot x^{1}-1-\text{tanh}\left(u^{3}\right)\\
f^{4}\left(x^{3},u^{4}\right) & =\frac{\left(x^{3}-4\right)}{5}+3+\frac{1}{\sqrt{10}}\cdot u^{4}
\end{align*}
\item Simpson, Nonlinear2\textbf{
\begin{align*}
f^{1}\left(u^{1}\right) & =t\left(u^{1}\right)\\
f^{2}\left(x^{1},u^{2}\right) & =x^{1}+t\left(u^{2}\right)\\
f^{3}\left(x^{1},x^{2},u^{3}\right) & =2\cdot\left(x^{1}\right)^{2}+\left(x^{2}\right)^{3}+t\left(u^{3}\right)\\
f^{4}\left(x^{3},u^{4}\right) & =x^{3}+t\left(u^{4}\right)
\end{align*}
}
\item Diamond, Nonlinear
\begin{align*}
f^{1}\left(u^{1}\right) & =u^{1}\\
f^{2}\left(x^{1},u^{2}\right) & =\left(x^{1}\right)^{2}+\frac{u^{2}}{2}\\
f^{3}\left(x^{1},x^{2},u^{3}\right) & =\left(x^{2}\right)^{2}-\frac{2}{1+\text{exp}\left(-x^{1}\right)}+\frac{u^{3}}{2}\\
f^{4}\left(x^{2},x^{3},u^{4}\right) & =\frac{x^{3}}{\left|x^{2}+2\right|+x^{3}+0.5}+\frac{u^{4}}{10}
\end{align*}
\item Diamond, Nonadditive
\begin{align*}
f^{1}\left(u^{1}\right) & =u^{1}\\
f^{2}\left(x^{1},u^{2}\right) & =\frac{\sqrt{\left|x^{1}\right|}\cdot\left|u^{2}+0.1\right|}{2}+\left|x^{1}\right|+\frac{u^{2}}{5}\\
f^{3}\left(x^{1},x^{2},u^{3}\right) & =\frac{1}{1+\left|u^{3}+0.5\right|\cdot\text{exp}\left(x^{1}-x^{2}\right)}\\
f^{4}\left(x^{2},x^{3},u^{4}\right) & =\left(x^{2}+x^{3}+\frac{u^{4}}{4}-7\right)^{2}-20
\end{align*}
\item Y, Nonlinear
\begin{align*}
f^{1}\left(u^{1}\right) & =u^{1}\\
f^{2}\left(u^{2}\right) & =u^{2}\\
f^{3}\left(x^{1},x^{2},u^{3}\right) & =\left(-\left(x^{2}\right)^{2}+\frac{4}{1+\text{exp}\left(-x^{1}-x^{2}\right)}+\frac{u^{3}}{2}\right)/1.83\\
f^{4}\left(x^{2},x^{3},u^{4}\right) & =\left(\frac{20}{1+\text{exp}\left(\frac{\left(x^{3}\right)^{2}}{2}-x^{3}\right)}+u^{4}\right)/3.26
\end{align*}
\item Y, Nonadditive
\begin{align*}
f^{1}\left(u^{1}\right) & =u^{1}\\
f^{2}\left(u^{2}\right) & =u^{2}\\
f^{3}\left(x^{1},x^{2},u^{3}\right) & =\left(x^{1}-2\cdot x^{2}-2\right)\cdot\left(\left|u^{3}\right|+0.2\right)\\
f^{4}\left(x^{3},u^{4}\right) & =\left(\text{cos}\left(x^{3}\right)+\frac{u^{4}}{2}\right)^{2}
\end{align*}
\item Large Backdoor, Nonlinear
\begin{align*}
l\left(x,y\right) & =\text{softplus}\left(x+1\right)+\text{softplus}\left(0.5+y\right)-3.0\\
f^{1}\left(u^{1}\right) & =\text{softplus}\left(1.8\cdot u^{1}\right)-1\\
f^{2}\left(x^{2},u^{2}\right) & =0.25\cdot u^{2}+1.5\cdot l\left(x^{1},0\right)\\
f^{3}\left(x^{1},u^{3}\right) & =l\left(x^{1},u^{3}\right)\\
f^{4}\left(x^{2},u^{4}\right) & =l\left(x^{2},u^{4}\right)\\
f^{5}\left(x^{3},u^{5}\right) & =l\left(x^{3},u^{5}\right)\\
f^{6}\left(x^{4},u^{6}\right) & =l\left(x^{4},u^{6}\right)\\
f^{7}\left(x^{5},u^{7}\right) & =l\left(x^{5},u^{7}\right)\\
f^{8}\left(x^{6},u^{8}\right) & =0.3\cdot u^{8}+\left(\text{softplus}\left(x^{6}+1\right)-1\right)\\
f^{9}\left(x^{7},x^{8},u^{9}\right) & =\boldsymbol{\text{CDF}}^{\boldsymbol{-1}}\left(-\text{softplus}\left(\frac{1.3\cdot x^{7}+x^{8}}{3}+1\right)+2,0.6,u^{9}\right)
\end{align*}
, where $\text{CDF}\left(\mu,b,x\right)$ is the quantile function
of a Laplace distribution with location $\mu$, scale $b$, evaluated
at $x$.
\item Large Backdoor, Nonadditive
\begin{align*}
f^{1}\left(u^{1}\right) & =u^{1}\\
f^{2}\left(x^{2},u^{2}\right) & =\left(x^{1}\right)^{2}+\frac{u^{2}}{2}\\
f^{3}\left(x^{1},u^{3}\right) & =\left(x^{1}\right)^{2}-\frac{2}{1+\text{exp}\left(-x^{1}\right)}+\frac{u^{3}}{2}\\
f^{4}\left(x^{2},u^{4}\right) & =\frac{u^{4}}{\left|x^{2}+2\right|+x^{2}+0.5}+\frac{u^{4}}{10}\\
f^{5}\left(x^{3},u^{5}\right) & =\left(x^{3}-u^{5}\right)^{2}-\frac{2}{1+\text{exp}\left(-x^{3}\right)+u^{5}}+\frac{u^{5}}{20}\\
f^{6}\left(x^{4},u^{6}\right) & =\left(x^{4}+u^{6}\right)^{2}-\frac{2}{1+\text{exp}\left(-u^{6}\right)+x^{4}}+\frac{u^{6}}{2}\\
f^{7}\left(x^{5},u^{7}\right) & =\left(x^{5}+u^{7}\right)^{2}-\frac{u^{7}}{1+\text{exp}\left(-u^{7}\right)+x^{5}}+\frac{u^{7}}{2}\\
f^{8}\left(x^{6},u^{8}\right) & =\left(x^{6}-u^{8}\right)^{2}-\frac{u^{8}}{1+\text{exp}\left(-u^{8}\right)+x^{6}}+\frac{u^{8}}{2}\\
f^{9}\left(x^{7},x^{8},u^{9}\right) & =\frac{\left(x^{7}-x^{8}-u^{9}\right)^{2}}{10}-\frac{x^{8}}{5+\frac{\text{exp}\left(-x^{7}\right)}{u^{9}}}+\frac{x^{8}}{2}
\end{align*}
\end{enumerate}
\begin{figure}
\begin{centering}
\subfloat[Triangle]{\includegraphics[width=0.12\paperwidth]{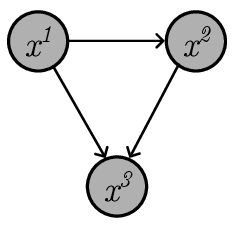}

}\hspace{0.5cm}\subfloat[Simpson]{\includegraphics[width=0.17\paperwidth]{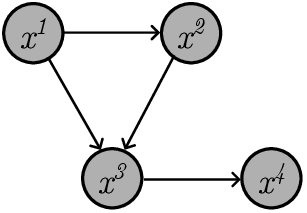}

}\hspace{0.5cm}\subfloat[Y]{\includegraphics[width=0.11\paperwidth]{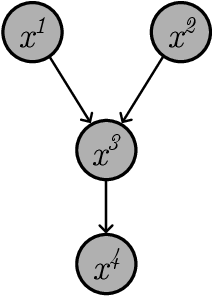}

}\vspace{1cm}
\par\end{centering}
\begin{centering}
\subfloat[Diamond]{\includegraphics[width=0.17\paperwidth]{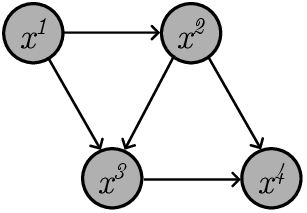}

}\hspace{0.5cm}\subfloat[Large Backdoor]{\includegraphics[width=0.23\paperwidth]{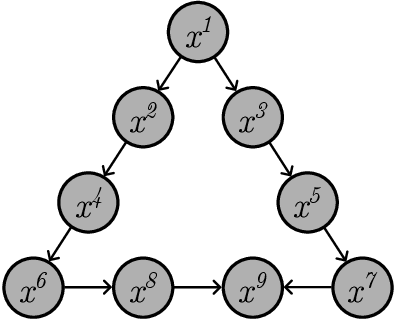}

}
\par\end{centering}
\caption{Graph structures of synthetic datasets}
\end{figure}

\subsection{Result}

Table \ref{tab:full_causal_inference_result} presents the performance
of the proposed models, S-CFM and P-CFM, as well as previous methods
across all datasets. VACA demonstrates poor performance across all
metrics and datasets. Although CausalNF shows competitive results
compared to our methods, CausalNF-MAF performs poorly on both observational
and interventional MMD metrics. Additionally, CausalCF-NSF shows low
accuracy in approximating counterfactuals. In contrast, our proposed
methods consistently show strong performance, often achieving the
best or competitive results in all metrics compared to CausalNF-NSF
and CausalNF-MAF. 

Fig. \ref{fig:scatter-y-nadd-int} presents scatter plots of interventional
samples generated from the ground-truth SCM and various methods. The
results indicate that while CausalNF-MAF struggles to approximate
the interventional distribution, both CausalNF-NSF and our proposed
methods, S-CFM and P-CFM, demonstrate high accuracy in approximating
the interventional distribution.

Fig. \ref{fig:pairplot-diamond-nonlinear} presents qualitative results
of the P-CFM model in capturing both observational and interventional
distributions for the Diamond, Nonlinear dataset. In this plot, true
distributions/samples are shown in blue, while P-CFM predicted distributions/samples
are depicted in orange. Fig. \ref{fig:obs-pairplot-diamond-nonlinear}
clearly illustrates that the model successfully captures the correlations
among all variables in the observational distribution. Fig. \ref{fig:int-pairplot-diamond-nonlinear}
shows the interventional distributions resulting from an intervention
on the 25-th empirical percentile of $x^{2}$, $\text{do}\left(x^{2}=0.08\right)$.
It is evident that P-CFM accurately learns the distribution of descendant
variables and effectively eliminates any dependency between the ancestors
of the intervened variable. Fig. \ref{fig:pairplot-y-nonadditive}
demonstrates a comparable result for the Y, Nonadditive dataset, further
validating the model's ability to handle different types of data and
structural equations.

\begin{table*}
\begin{centering}
\begin{tabular}{>{\raggedright}m{1.8cm}lrrrrr}
\toprule 
\multirow{2}{1.8cm}{\centering{}SCM} & \multirow{2}{*}{Metric} & S-CFM & P-CFM & CausalNF-NSF & CausalNF-MAF & VACA\tabularnewline
\cmidrule{3-7} \cmidrule{4-7} \cmidrule{5-7} \cmidrule{6-7} \cmidrule{7-7} 
 &  & $\left(\times10^{-2}\right)$ & $\left(\times10^{-2}\right)$ & $\left(\times10^{-2}\right)$ & $\left(\times10^{-2}\right)$ & $\left(\times10^{-2}\right)$\tabularnewline
\midrule
\multirow{3}{1.8cm}{\centering{}Triangle\linebreak{}
NADD} & Obs. MMD & $0.67_{0.07}$ & \emph{$\mathit{0.65_{0.15}}$} & $\boldsymbol{0.43_{0.08}}$ & $2.03_{0.14}$ & $2.37_{0.25}$\tabularnewline
 & Int. MMD & $3.21_{0.34}$ & \textbf{$\mathit{3.16_{0.24}}$} & $\boldsymbol{1.86_{0.09}}$ & $13.31_{0.29}$ & $15.53_{1.24}$\tabularnewline
 & CF. MSE & $\boldsymbol{6.53_{0.34}}$ & $\mathit{9.59_{3.53}}$ & $41.90_{11.74}$ & $9.91_{1,02}$ & $146.73_{16.29}$\tabularnewline
\midrule
\multirow{3}{1.8cm}{\centering{}Simpson\linebreak{}
NLIN} & Obs. MMD & $\boldsymbol{0.57_{0.05}}$ & \textbf{$\mathit{0.61_{0.05}}$} & $0.62_{0.05}$ & $3.54_{0.21}$ & $2.44_{0.22}$\tabularnewline
 & Int. MMD & $\boldsymbol{0.58_{0.01}}$ & \textbf{$\mathit{0.60_{0.04}}$} & $0.61_{0.01}$ & $4.34_{0.10}$ & $2.55_{0.32}$\tabularnewline
 & CF. MSE & \textbf{$\boldsymbol{0.72_{0.18}}$} & $\mathit{0.73_{0.18}}$ & $1.98_{0.86}$ & $1.29_{0.20}$ & $15.28_{1.25}$\tabularnewline
\midrule
\multirow{3}{1.8cm}{\centering{}Simpson\linebreak{}
NLIN2} & Obs. MMD & $\mathit{0.44_{0.07}}$ & $\mathit{0.44_{0.06}}$ & \textbf{$\boldsymbol{0.41_{0.03}}$} & $0.85_{0.28}$ & $2.44_{0.22}$\tabularnewline
 & Int. MMD & $\boldsymbol{0.43_{0.02}}$ & $\mathit{0.44_{0.02}}$ & $0.46_{0.02}$ & $0.85_{0.06}$ & $2.55_{0.32}$\tabularnewline
 & CF. MSE & $\boldsymbol{0.06_{0.01}}$ & $0.16_{0.06}$ & $3.34_{0.70}$ & $\mathit{0.09_{0.02}}$ & $15.28_{1.25}$\tabularnewline
\midrule
\multirow{3}{1.8cm}{\centering{}Diamond\linebreak{}
NLIN} & Obs. MMD & $\boldsymbol{0.36_{0.02}}$ & $\mathit{0.38_{0.06}}$ & $0.44_{0.04}$ & $2.26_{4.15}$ & $5.23_{0.35}$\tabularnewline
 & Int. MMD & \textbf{$\mathit{0.41_{0.05}}$} & \textbf{$\mathit{0.41_{0.06}}$} & \textbf{$\boldsymbol{0.34_{0.02}}$} & $4.49_{8.95}$ & $26.64_{0.63}$\tabularnewline
 & CF. MSE & \textbf{$\mathit{0.04_{0.04}}$} & $\boldsymbol{0.03_{0.01}}$ & $16.06_{2.07}$ & $197.27_{438.22}$ & $60.62_{3.45}$\tabularnewline
\midrule
\multirow{3}{1.8cm}{\centering{}Diamond\linebreak{}
NADD} & Obs. MMD & \textbf{$\boldsymbol{0.5}\boldsymbol{1_{0.07}}$} & $0.60_{0.07}$ & \textbf{$\mathit{0.54_{0.03}}$} & $0.62_{0.16}$ & $2.96_{0.35}$\tabularnewline
 & Int. MMD & \textbf{$\boldsymbol{0.71_{0.09}}$} & $1.14_{0.39}$ & $\mathit{1.03_{0.08}}$ & $1.96_{0.07}$ & $17.91_{1.12}$\tabularnewline
 & CF. MSE & \textbf{$\boldsymbol{35.31_{3.00}}$} & $44.55_{7.36}$ & $\mathit{44.02_{3.71}}$ & $45.95_{3.04}$ & $40.04_{0.73}$\tabularnewline
\midrule 
\multirow{3}{1.8cm}{\centering{}Y\linebreak{}
NLIN} & Obs. MMD & $\boldsymbol{0.41_{_{0.01}}}$ & \textbf{$\mathit{0.42_{0.02}}$} & $0.47_{0.07}$ & $0.42_{0.04}$ & $2.95_{0.06}$\tabularnewline
 & Int. MMD & $\boldsymbol{0.40_{0.02}}$ & \textbf{$\mathit{0.41_{0.02}}$} & $0.42_{0.03}$ & $0.43_{0.02}$ & $3.99_{0.80}$\tabularnewline
 & CF. MSE & $\boldsymbol{0.04_{0.00}}$ & \textbf{$\mathit{0.05_{0.00}}$} & $0.05_{0.00}$ & $0.18_{0.11}$ & $5.51_{1.35}$\tabularnewline
\midrule 
\multirow{3}{1.8cm}{\centering{}Y\linebreak{}
NADD} & Obs. MMD & $\boldsymbol{0.46_{0.05}}$ & $\boldsymbol{0.46_{0.05}}$ & \textbf{$\mathit{0.55_{0.07}}$} & $1.01_{0.05}$ & $1.68_{0.08}$\tabularnewline
 & Int. MMD & $\boldsymbol{0.48_{0.03}}$ & \textbf{$\mathit{0.51_{0.04}}$} & $1.75_{0.70}$ & $1.56_{0.13}$ & $4.44_{0.20}$\tabularnewline
 & CF. MSE & $\boldsymbol{28.58_{1.04}}$ & \textbf{$\mathit{29.67_{0.84}}$} & $37.31_{3.32}$ & $30.27_{0.78}$ & $32.45_{0.72}$\tabularnewline
\midrule
\multirow{3}{1.8cm}{\centering{}LargeBD\linebreak{}
NLIN} & Obs. MMD & \textbf{$\mathit{0.40_{0.03}}$} & $\boldsymbol{0.39_{0.02}}$ & $0.42_{0.04}$ & $0.97_{0.06}$ & $1.34_{1.14}$\tabularnewline
 & Int. MMD & $\boldsymbol{0.41_{0.02}}$ & $0.48_{0.03}$ & \textbf{$\mathit{0.43_{0.03}}$} & $0.99_{0.03}$ & $0.98_{0.06}$\tabularnewline
 & CF. MSE & $\boldsymbol{0.02_{0.00}}$ & \textbf{$\mathit{0.03_{0.00}}$} & $0.04_{0.00}$ & $0.04_{0.00}$ & $0.71_{0.01}$\tabularnewline
\midrule
\multirow{3}{1.8cm}{\centering{}LargeBD\linebreak{}
NADD} & Obs. MMD & $\boldsymbol{0.50_{0.04}}$ & \textbf{$\mathit{0.52_{0.05}}$} & \textbf{$\mathit{0.52_{0.03}}$} & $0.90_{0.12}$ & $150.46_{1.11}$\tabularnewline
 & Int. MMD & \textbf{$\mathit{0.56_{0.05}}$} & $0.60_{0.05}$ & $\boldsymbol{0.49_{0.02}}$ & $1.23_{0.07}$ & $187.99_{0.49}$\tabularnewline
 & CF. MSE & \textbf{$\mathit{2.62_{0.40}}$} & $5.47_{0.58}$ & $61.80_{18.83}$ & $\boldsymbol{2.41_{0.47}}$ & $333.85_{309.92}$\tabularnewline
\midrule
\multirow{3}{1.8cm}{\centering{}LargeLadder\linebreak{}
NLIN} & Obs. MMD & $\boldsymbol{0.63_{0.01}}$ & $\textit{0.64}_{\textit{0.00}}$ & $0.66_{0.04}$ & $0.65_{0.05}$ & -\tabularnewline
 & Int. MMD & $\boldsymbol{0.64_{0.01}}$ & $\textit{0.65}_{\textit{0.00}}$ & $0.66_{0.00}$ & $\textit{0.65}_{\textit{0.00}}$ & -\tabularnewline
 & CF. MSE & $\textit{42.71}_{\textit{3.12}}$ & $46.79_{6.67}$ & $72.63_{5.4}$ & $\boldsymbol{38.46_{1.30}}$ & -\tabularnewline
\midrule
\multirow{3}{1.8cm}{\centering{}LargeLadder\linebreak{}
NADD} & Obs. MMD & $\boldsymbol{0.58_{0.06}}$ & $\textit{0.61}_{\textit{0.02}}$ & $0.62_{0.03}$ & $0.71_{0.05}$ & -\tabularnewline
 & Int. MMD & $\boldsymbol{0.62_{0.00}}$ & $\textit{0.63}_{\textit{0.00}}$ & $\boldsymbol{0.62_{0.00}}$ & $0.68_{0.00}$ & -\tabularnewline
 & CF. MSE & $\boldsymbol{27.85_{1.39}}$ & $\textit{29.09}_{\textit{0.29}}$ & $36.23_{3.49}$ & $37.42_{1.94}$ & -\tabularnewline
\bottomrule
\end{tabular}\caption{Mean and standard deviation of eight SCMs with non-linear and non-additive
structural equations of proposed methods and existing methods. The
values are scaled by 100 for interpretable. Each method is evaluated
over 5 random initializations of the model and training data. \label{tab:full_causal_inference_result}}
\par\end{centering}
\end{table*}

\begin{figure}
\subfloat[Observational distribution\label{fig:obs-pairplot-diamond-nonlinear}]{\begin{centering}
\includegraphics[width=0.47\textwidth]{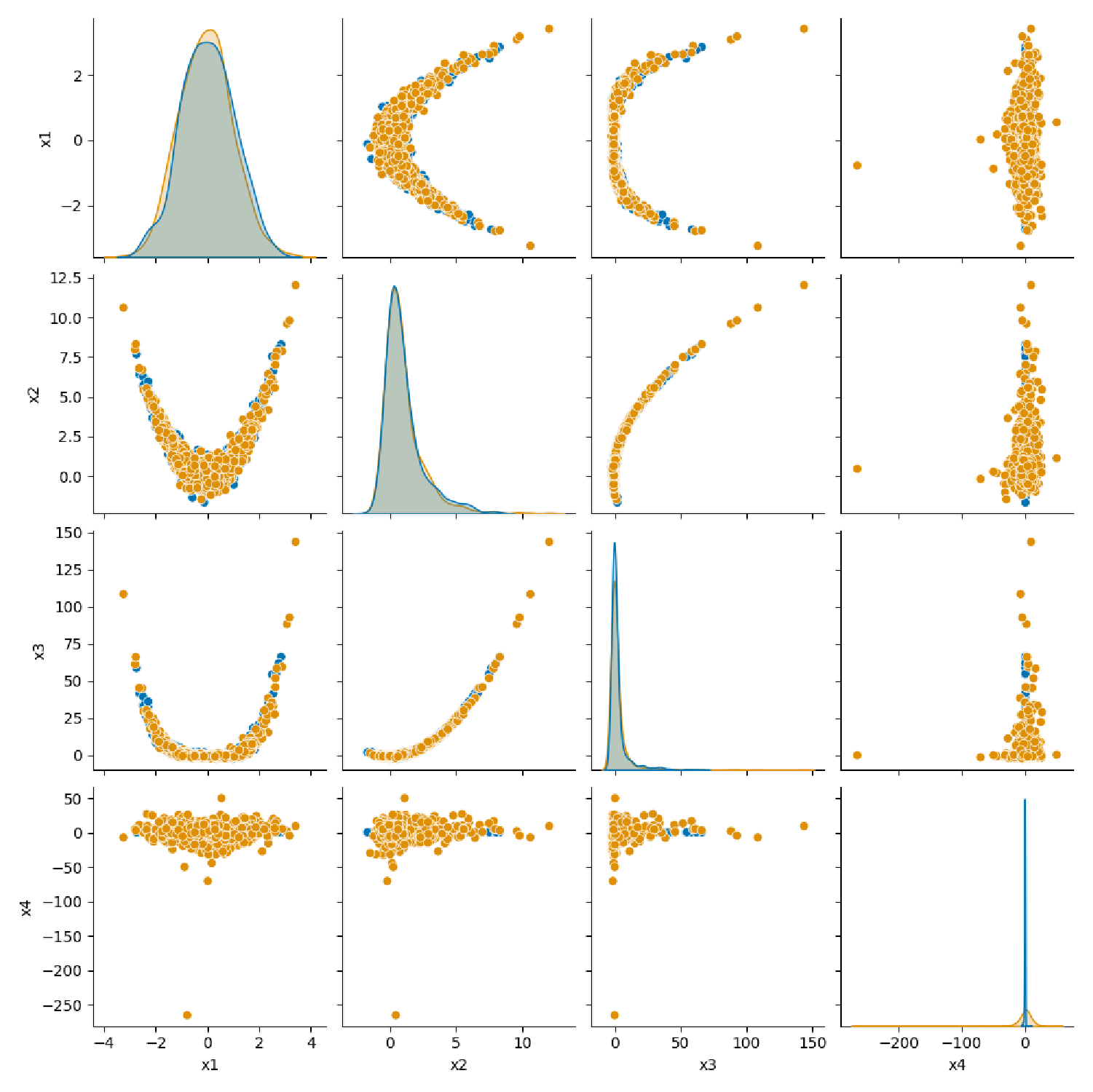}
\par\end{centering}

}\hspace{0.5cm}\subfloat[Interventional distribution\label{fig:int-pairplot-diamond-nonlinear}]{\begin{centering}
\includegraphics[width=0.47\textwidth]{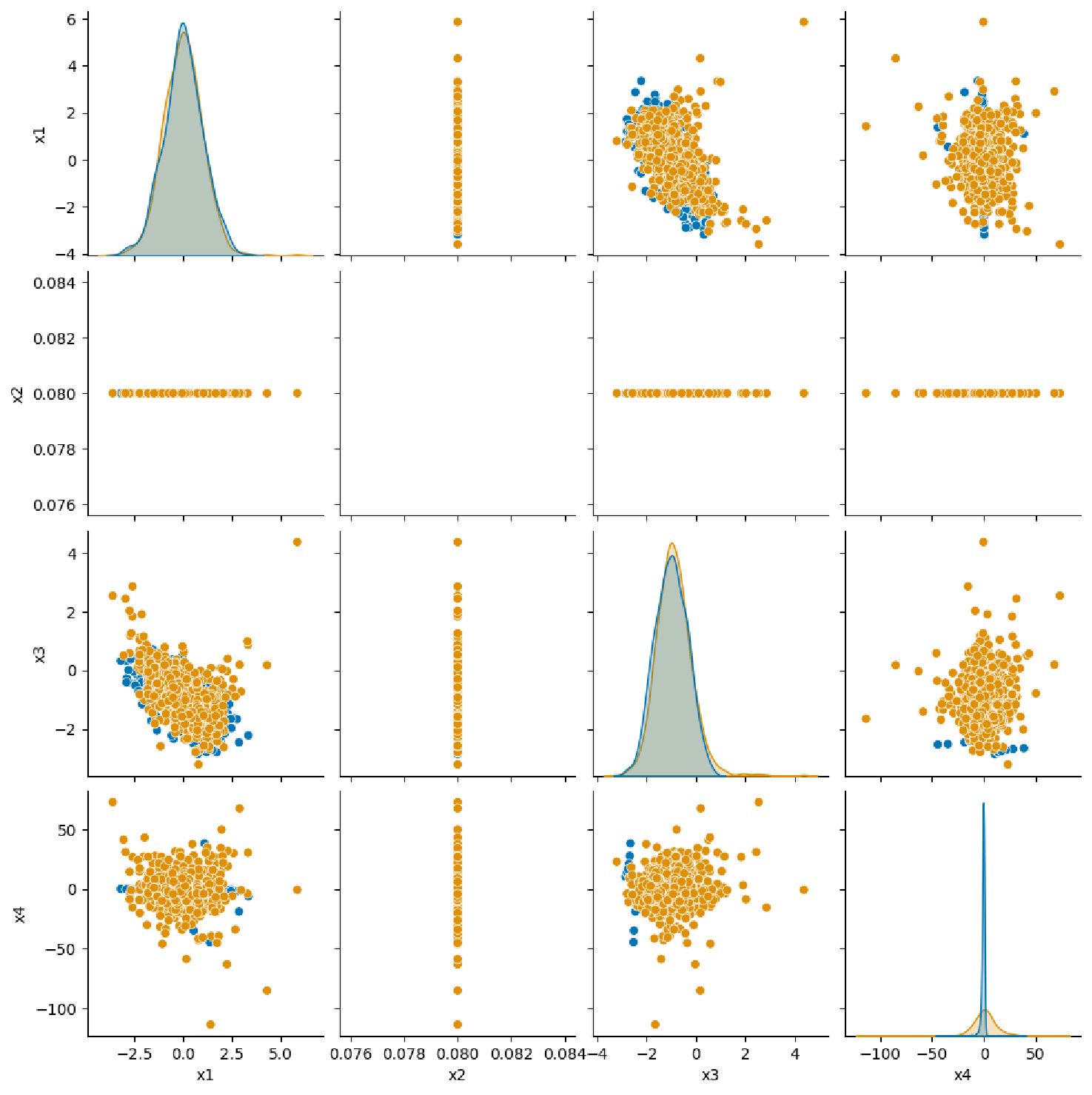}
\par\end{centering}
}

\caption{Pair plot of true (blue) and P-CFM predicted (orange) data for the
Diamond, Nonlinear dataset. True and predicted observational samples
are displayed on the left. True and predicted interventional samples
under $\text{do}\left(x^{2}=0.08\right)$ are shown on the right.
\label{fig:pairplot-diamond-nonlinear}}
\end{figure}

\begin{figure}
\subfloat[Observational distribution]{\begin{centering}
\includegraphics[width=0.47\textwidth]{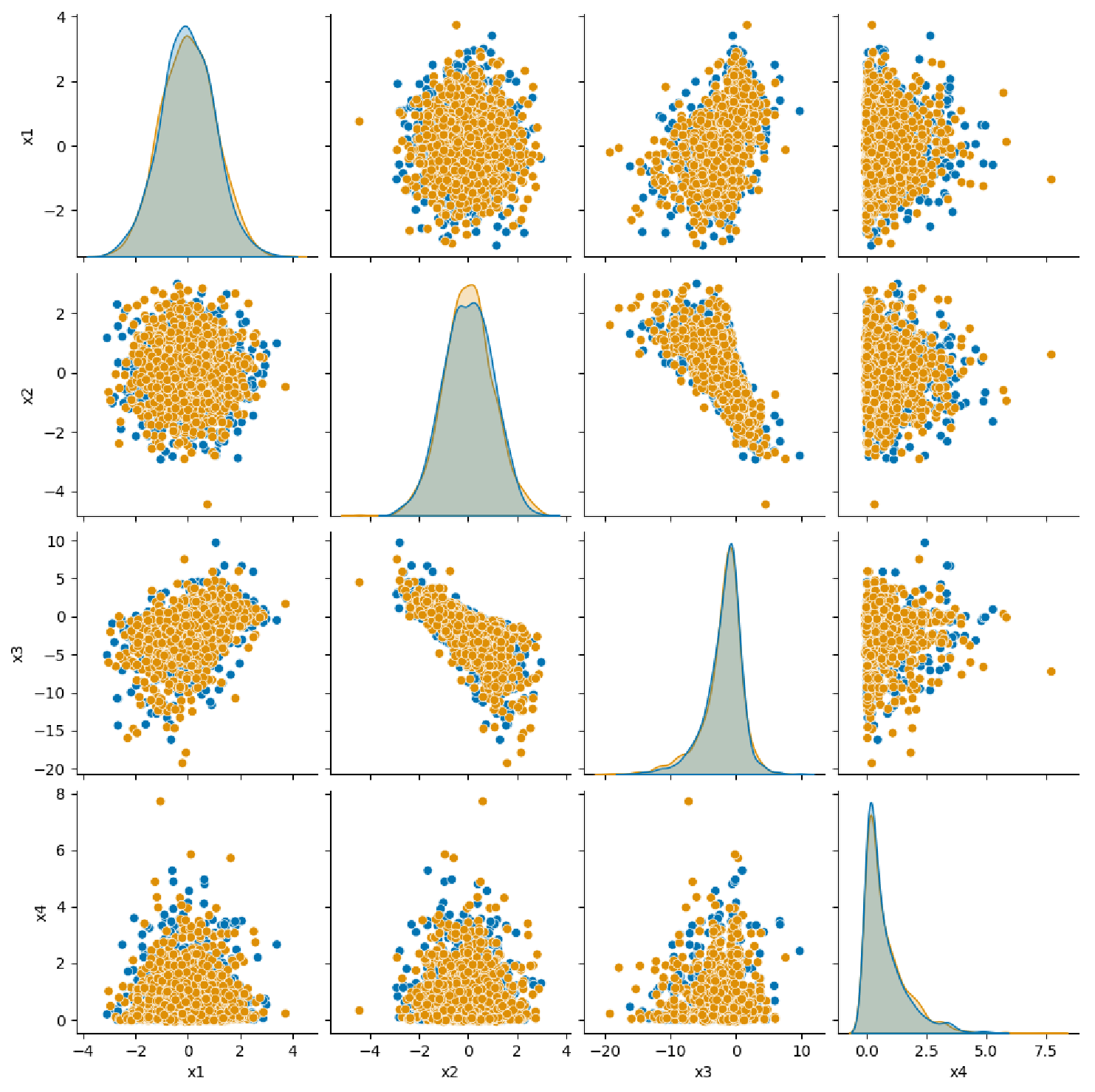}
\par\end{centering}
}\hspace{0.5cm}\subfloat[Interventional distribution]{\begin{centering}
\includegraphics[width=0.47\textwidth]{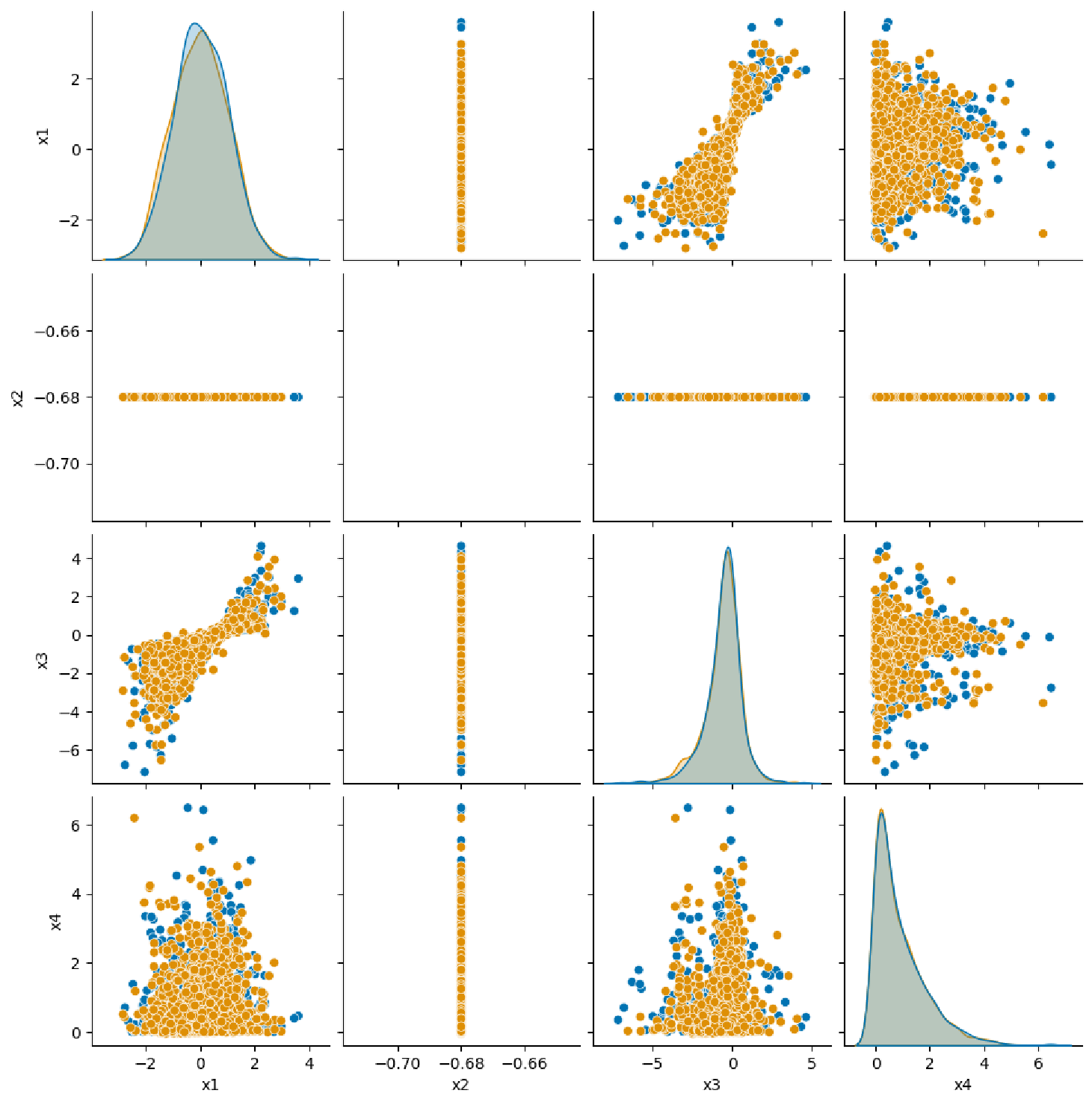}
\par\end{centering}
}

\caption{Pair plot of true (blue) and P-CFM predicted (orange) data for the
Y, Nonadditive dataset. True and predicted observational samples are
displayed on the left. True and predicted interventional samples under
$\text{do}\left(x^{2}=-0.68\right)$ are shown on the right. \label{fig:pairplot-y-nonadditive}}
\end{figure}

\begin{figure}
\subfloat[Scatter plots of $x^{3}$ and $x^{4}$ from the Y, Nonadditive dataset
under the intervention $\text{do}\left(x^{1}=0.68\right)$]{\begin{centering}
\includegraphics[width=1\textwidth]{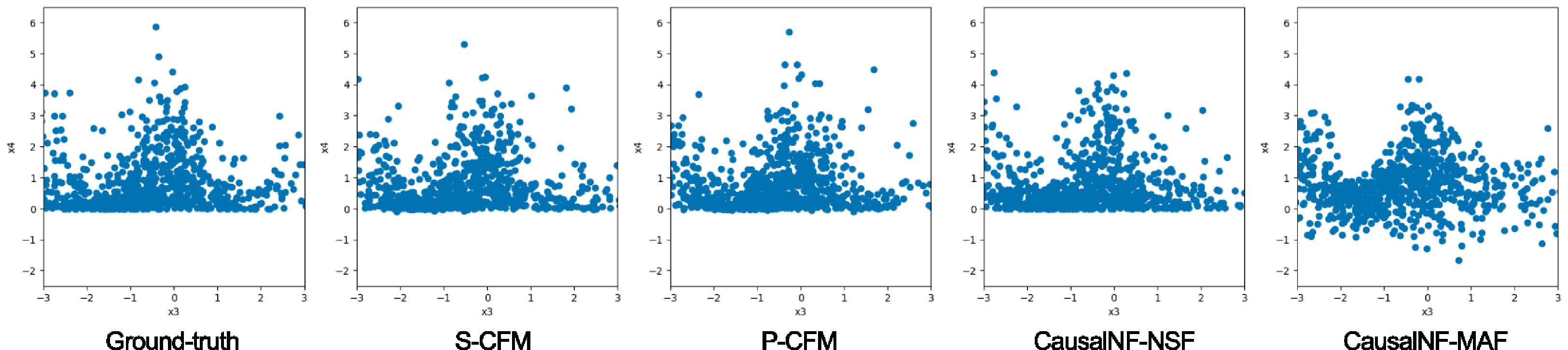}
\par\end{centering}

}\vspace{0.5cm}

\subfloat[Scatter plots of $x^{3}$ and $x^{4}$ from the Y, Nonadditive dataset
under the intervention $\text{do}\left(x^{2}=0.67\right)$]{\begin{centering}
\includegraphics[width=1\textwidth]{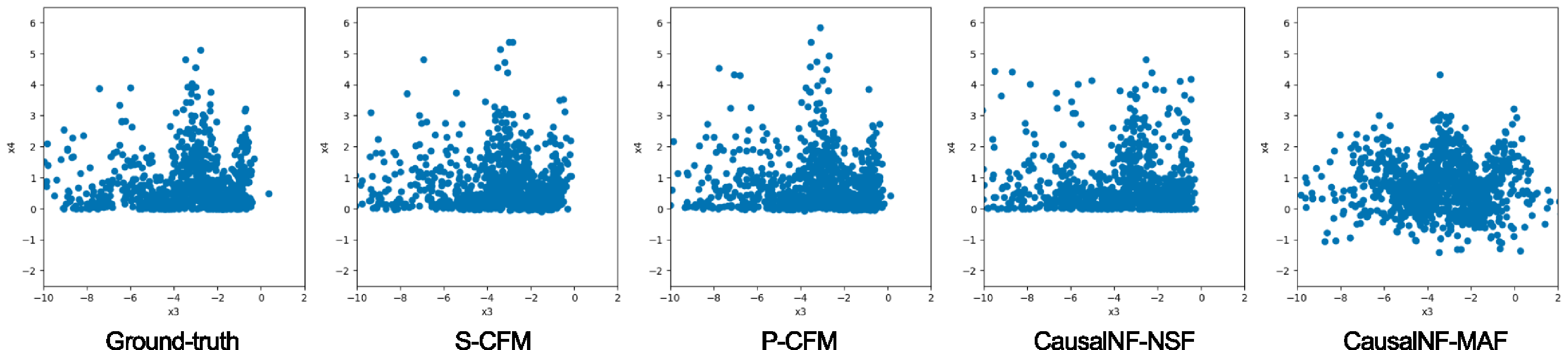}
\par\end{centering}
}\caption{Scatter plots of $x^{3}$ and $x^{4}$ from the Y, Nonadditive dataset,
showing the effects of interventions on $x^{1},x^{2}$, are compared
between the ground-truth and predictions from different methods\label{fig:scatter-y-nadd-int}}

\end{figure}

\pagebreak{}

\section{Visualization}

In this section, we present a visualization that demonstrates the
quality of our method in estimating the counterfactual of an image.

\subsection{Experimental settings}

\paragraph{Dataset}

We aim to model the causal structure of a synthetic dataset generated
from Morpho-MNIST \cite{castro2019morpho}. We extend the previous
use cases of Morpho-MNIST \cite{pmlr-v202-de-sousa-ribeiro23a,Pawlowski2020}
by introducing the Angle variable into the causal structure. This
addition creates a causal graph with a confounder, thereby making
the problem more challenging to solve. We define stroke thickness
as influencing both the brightness and the angle of the digit. Specifically,
a thicker digit results in a brighter and more slanted digit, and
vice versa. Additionally, brightness has a slight positive effect
on the angle of the digit. To generate images based on latent variables,
we use morphological transformations as described in \cite{castro2019morpho}.
Causal graph is depicted in Fig.~\ref{fig:morphomnist}, and the
structural equations are shown below:
\begin{align*}
\text{Thickness} & =t=f^{1}\left(u^{1}\right)=2.5+0.64\cdot u^{1}, & u^{1}\sim\mathcal{N}\left(0,1\right),\\
\text{Intensity} & =i=f^{2}\left(x^{1},u^{2}\right)=191+\text{Sigmoid}\left(0.5\cdot u^{2}+2\cdot x^{1}-5\right)+64, & u^{2}\sim\mathcal{N}\left(0,1\right),\\
\text{Angle} & =s=f^{3}\left(x^{1},x^{2},u^{3}\right)=\frac{2}{3}\cdot\pi+\text{Sigmoid}\left(u^{3}+2\cdot x^{1}-\frac{2\cdot x^{2}}{255}-6\right)-\frac{\pi}{3}, & u^{3}\sim\mathcal{N}\left(0,1\right),\\
\text{Digit} & =y=f^{4}\left(u^{4}\right)=u^{4}, & u^{4}\sim\mathcal{U}\left\{ 0,9\right\} ,\\
\text{Image} & =x=f^{5}\left(t,i,s,y,u^{5}\right)=\text{SetAngle}\left(\text{SetThickness}\left(\text{SetIntensity}\left(u^{5},i\right),t\right),s\right), & u^{5}\sim\text{MNIST},
\end{align*}
where SetAngle, SetThickness and SetIntensity are operators that act
on images, transforming original MNIST images into images with thickness,
intensity and angle consistent with latent variables. Fig. \ref{fig:morphomnist-samples}
shows random samples from the dataset with various values of Thickness,
Intensity and Angle.

\begin{figure}
\begin{centering}
\includegraphics[width=0.3\paperwidth]{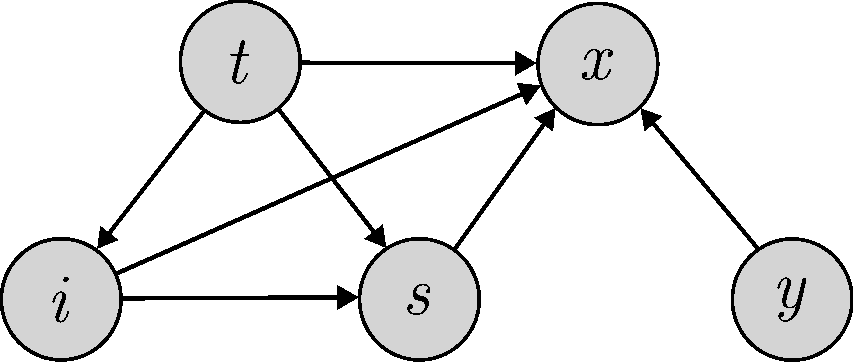}
\par\end{centering}
\caption{Causal graph or Morpho-MNIST, $t$ is Thickness, $i$ is Intensity,
$s$ is Angle, $y$ is Digit class, $x$ is Image\label{fig:morphomnist}}

\end{figure}

\begin{figure}
\centering{}\includegraphics[width=1\textwidth]{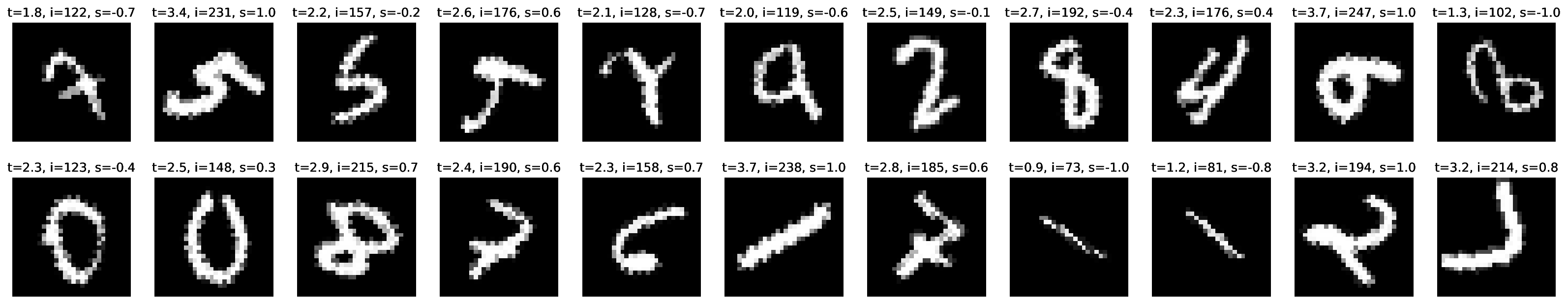}\caption{Random samples from the Morpho-MNIST dataset \label{fig:morphomnist-samples}}
\end{figure}
\begin{figure}
\begin{centering}
\includegraphics[width=1\textwidth]{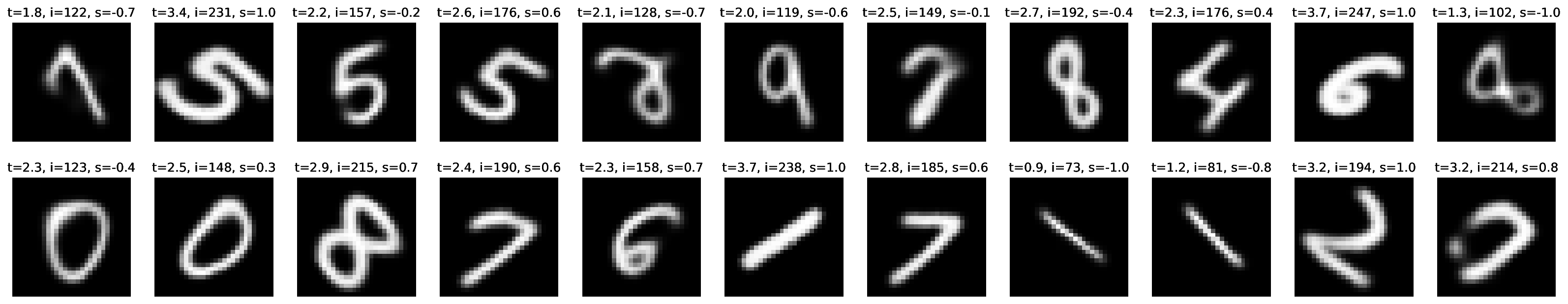}\caption{Reconstructed images generated by the VAE from the Morpho-MNIST dataset
\label{fig:morphomnist-recon}}
\par\end{centering}
\end{figure}

\paragraph{Model}

We use S-CFM for modelling causal graph of Thickness, Intensity and
Angle. We employ a VAE model with a ResNet-18 encoder and decoder
\cite{he2016deep}. To ensure consistency with the latent variables,
we utilize a fully connected neural network to encode the latent variables
(Thickness, Intensity, Angle, and Digit). The encoded information
from the images and latent variables is then concatenated and fed
into the decoder. This design ensures that the ResNet encoder focuses
on encoding the style of the image, while the fully connected neural
network encodes the latent variable information. Fig. \ref{fig:morphomnist-recon}
shows reconstructed images generated by the VAE from random samples
presented in Fig. \ref{fig:morphomnist-samples}. While the VAE model
preserves latent information, it slightly alters the style of the
images.

When estimating the counterfactual of an image, we first calculate
the counterfactual of the latent variables using S-CFM. We then feed
the factual image and counterfactual latents into the VAE. The reconstructed
image generated by the VAE is the counterfactual image we seek.

\subsection{Result}

The result of estimating counterfactual when intervening on Thickness,
Intensity and Angle are shown in Figures \ref{fig:counterfactual-morphomnist-thickness},
\ref{fig:counterfactual-morphomnist-intensity}, and \ref{fig:counterfactual-morphomnist-slant}.
We observe that intervening on Thickness also alters Intensity and
Angle. In contrast, intervening on Intensity leaves Thickness unchanged
and causes only slight changes to Angle. Intervening on Angle keeps
both Thickness and Intensity unchanged. The VAE model effectively
reconstructs counterfactual images. Specifically, a higher thickness
results in a brighter and more slanted digit. When intervening on
Intensity, the counterfactual images are generated without altering
the digit\textquoteright s thickness. Intervening on Angle does not
change the Thickness and Intensity of the digit in the counterfactual
images. In some cases, the VAE model alters the style of the digits
without changing the digit labels. This issue is the problem of the
VAE model while still consistent with latent variables. This experiment
serves as a qualitative verification of our method, demonstrating
its consistency with the causal graph shown in Fig.~\ref{fig:morphomnist}
and its capability for accurate counterfactual estimation. 

\begin{figure}
\begin{centering}
\includegraphics[width=0.7\textwidth]{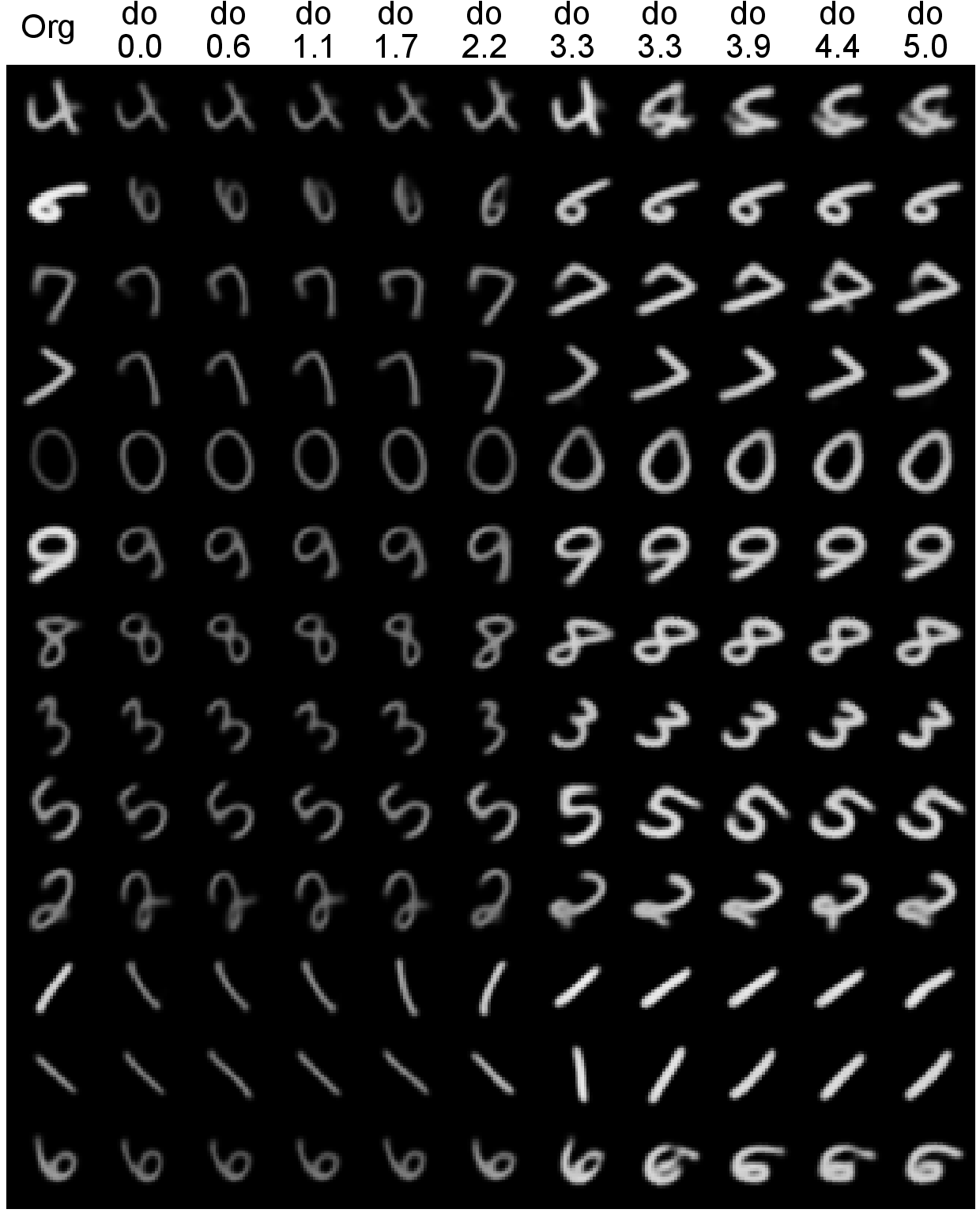}
\par\end{centering}
\caption{Counterfactual when intervening on Thickness\label{fig:counterfactual-morphomnist-thickness}}

\end{figure}

\begin{figure}
\begin{centering}
\includegraphics[width=0.7\textwidth]{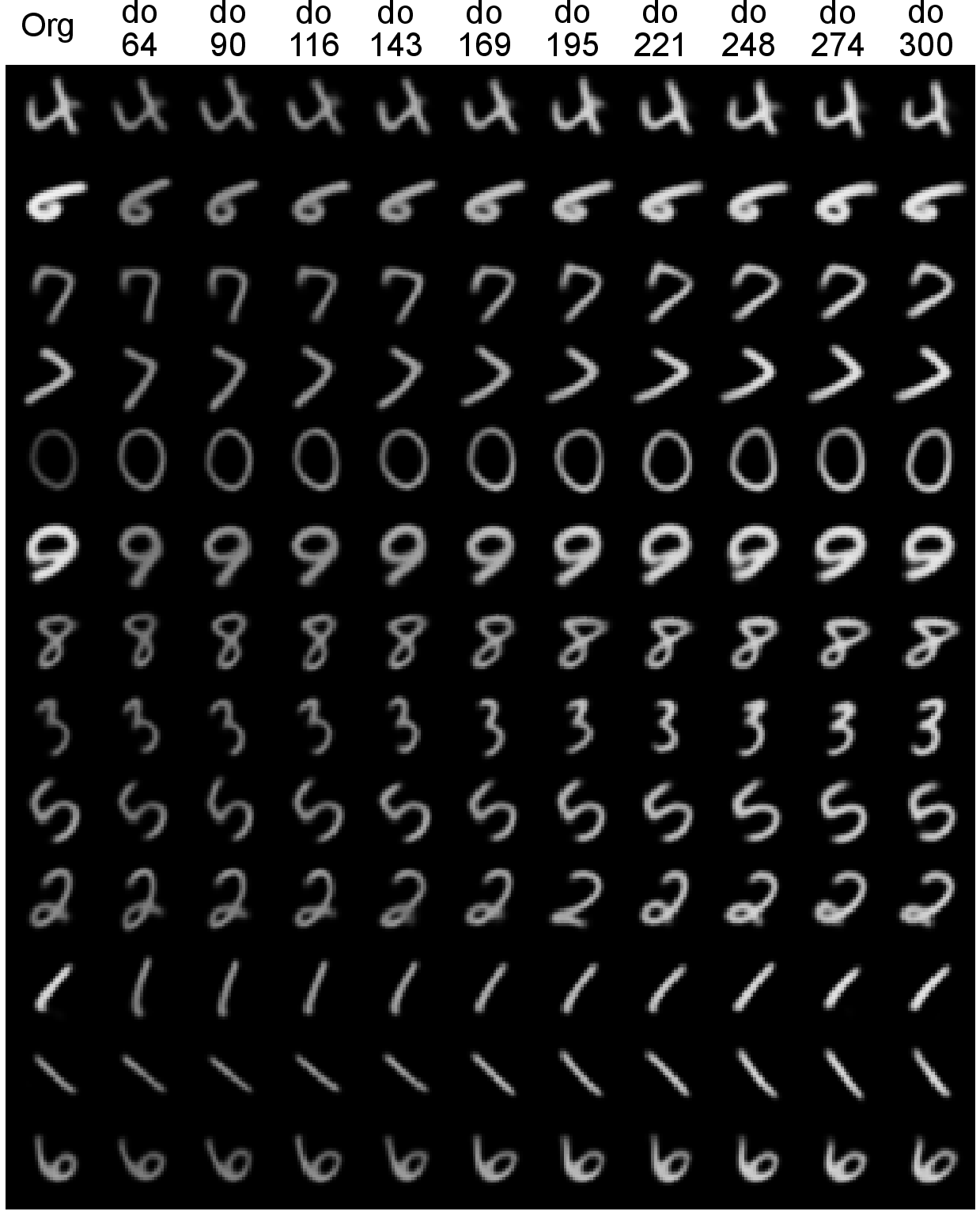}
\par\end{centering}
\caption{Counterfactual when intervening on Intensity\label{fig:counterfactual-morphomnist-intensity}}
\end{figure}

\begin{figure}
\begin{centering}
\includegraphics[width=0.7\textwidth]{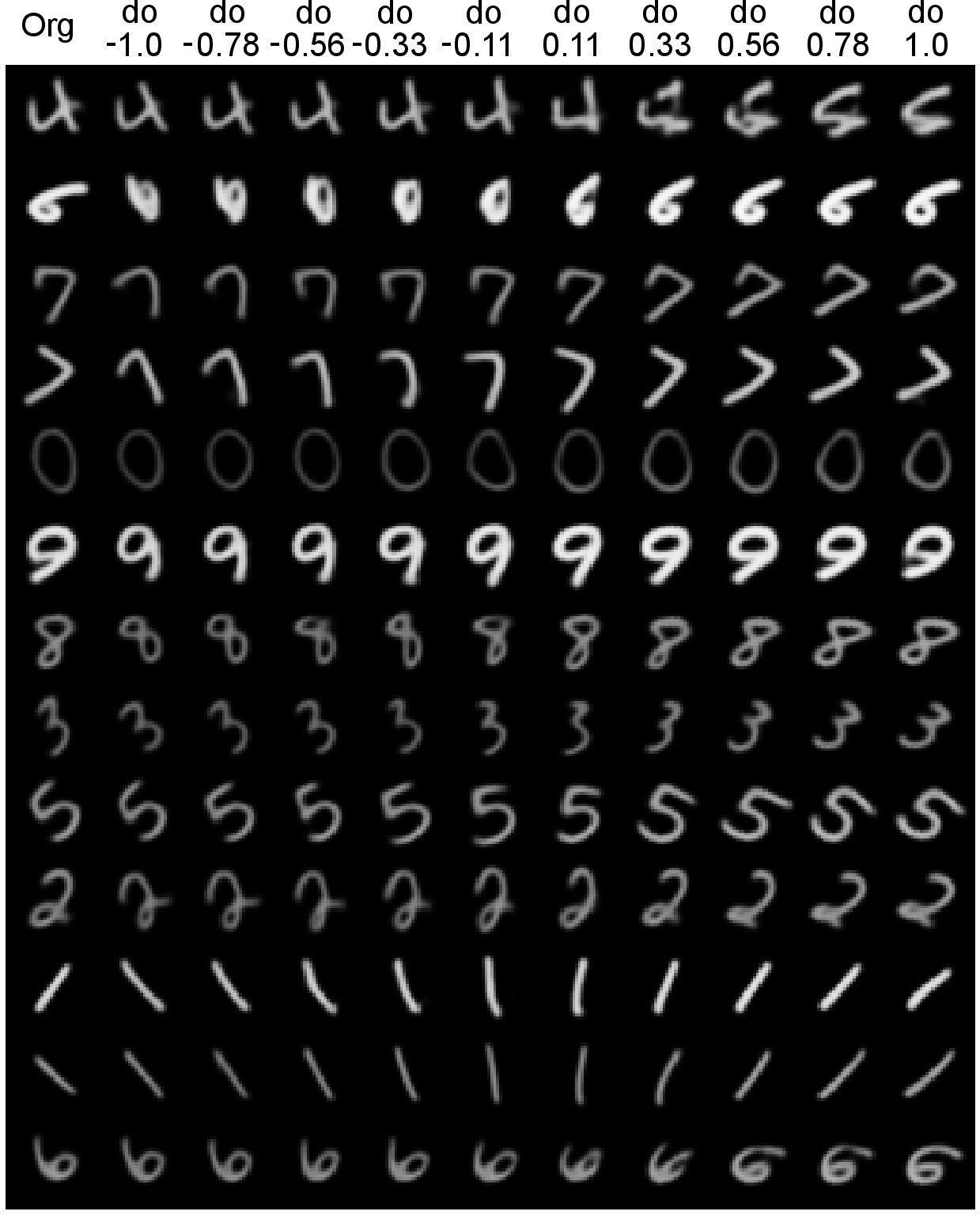}
\par\end{centering}
\caption{Counterfactual when intervening on Angle\label{fig:counterfactual-morphomnist-slant}}
\end{figure}

\end{document}